\newtheorem{theorem}{Theorem}
\newtheorem{corollary}{Corollary}
\newtheorem{lemma}{Lemma}
\newtheorem{definition}{Definition}
\DeclareMathOperator*{\argmax}{arg\,max}
\newcommand{\sgn}{\mathop{\mathrm{sign}}}
\newcommand{\comment}[1]{}
\newcommand{\com}[1]{{\color{red}#1}}
\newcommand{\com}[1]{}
\newcommand{\add}[1]{{\color{blue}#1}}
\newcommand{\add}[1]{}
\newcommand{\rev}[1]{{\color{blue}#1}}
\newcommand{\rev}[1]{#1}
\begin{document}
	
	\title{\LARGE \bf Gambler's Ruin Bandit Problem}
	
	\author{\IEEEauthorblockN{Nima Akbarzadeh, Cem Tekin} 
		\IEEEauthorblockA{Bilkent University, Electrical and Electronics Engineering Department, Ankara, Turkey}\thanks{Cem Tekin is supported by TUBITAK 2232 Fellowship (116C043).} }
	\maketitle
	
	\begin{abstract}
		In this paper, we propose a new multi-armed bandit problem called the {\em Gambler's Ruin Bandit Problem} (GRBP). In the GRBP, the learner proceeds in a sequence of rounds, where each round is a Markov Decision Process (MDP) with two actions (arms): a {\em continuation action} that moves the learner randomly over the state space around the current state; and a {\em terminal action} that moves the learner directly into one of the two terminal states (goal and dead-end state). The current round ends when a terminal state is reached, and the learner incurs a positive reward only when the goal state is reached.
		The objective of the learner is to maximize its long-term reward (expected number of times the goal state is reached), without having any prior knowledge on the state transition probabilities. 
		We first prove a result on the form of the optimal policy for the GRBP. Then, we define the regret of the learner with respect to an {\em omnipotent oracle}, which acts optimally in each round, and prove that it increases logarithmically over rounds.
		We also identify a condition under which the learner's regret is bounded. 
		A potential application of the GRBP is optimal medical treatment assignment, 
		in which the continuation action corresponds to a conservative treatment and the terminal action corresponds to a risky treatment such as surgery.
	\end{abstract}
	\section{Introduction} \label{sec:Introduction}
	Multi-armed bandits (MAB) are used to model a plethora of applications that require sequential decision making under uncertainty ranging from clinical trials \cite{villar2015mabcln} to web advertising \cite{tekin2015releaf}. In the conventional MAB \cite{lai1985aeaar, auer2002gdmdp} the learner chooses an action from a finite set of actions at each round, and receives a random reward. The goal of the learner is to maximize its long-term expected reward by choosing actions that yield high rewards. This is a non-trivial task, since the reward distributions are not known beforehand. Numerous order-optimal index-based learning rules have been developed for the conventional MAB \cite{auer2002gdmdp,garivier2011klucb,auer2010ucb}. These rules act myopically by choosing the action with the maximum index in each round. 
	
	Situations that require multiple actions to be taken in each round cannot be modeled using conventional MAB. As an example, consider medical treatment administration.
	At the beginning of each round a patient arrives to the {\em intensive care unit} (ICU) with a random initial health state. The goal state is defined as {\em discharge} and dead-end state is defined as {\em death}. Actions correspond to treatment options that move the patient randomly over the state space. The objective is to maximize the expected number of patients that are discharged by learning the optimal treatment policy using the observations gathered from the previous patients. In the example given above, each round corresponds to a goal-oriented Markov Decision Process (MDP) with dead-ends \cite{Kolobov12MDPGD}. The learner knows the state space, goal and dead-end states, but does not know the state transition probabilities a priori. At each round, the learner chooses a sequence of actions and only observes the state transitions that result from the chosen actions. In the literature, this kind of feedback information is called {\em bandit feedback} \cite{bubeck2012regmab}. 
	
	Motivated by the application described above, we propose a new MAB problem in which multiple arms are selected in each round until a terminal state is reached. Due to its resemblance to the {\em Gambler's Ruin Problem} \cite{takacs1969crp,hunter2008grcw,uem2013mmmgrp}, we call this new MAB problem the {\em Gambler's Ruin Bandit Problem} (GRBP).
	In GRBP, the system proceeds in a sequence of rounds $\rho \in \{1,2,\ldots\}$. 
	Each round is modeled as an MDP (as in Fig. \ref{fig:GRBP} \com{(In Figure 1, $p^d$ and $p^u$ should change to $p^D$ and $p^C$.)} ) with unknown state transition probabilities and terminal (absorbing) states. The set of terminal states includes a {\em goal state} $G$ and a {\em dead-end state} $D$, and the non-terminal states are ordered between the goal and dead-end states. In each non-terminal state, there are two possible actions: a {\em continuation action} (action $C$) that moves the learner randomly over the state space around the current state; and a {\em terminal action} (action $F$) that moves the learner directly into a terminal state.
	Starting from a random, non-terminal initial state, the learner chooses a sequence of actions and observes the resulting state transitions until a terminal state is reached.
	The learner incurs a unit reward if the goal state is reached. Otherwise, it incurs no reward. The goal of the learner is to maximize its cumulative expected reward over the rounds. 
	
	If the state transition probabilities were known beforehand, an {\em omnipotent oracle} with unlimited computational power could calculate the optimal policy that maximizes the probability of hitting the goal state from any initial state, and then select its actions according to the optimal policy. We define the regret of the learner by round $\rho$ as the difference in the expected number of times the goal state is reached by the omnipotent oracle and the learner by round $\rho$. 
	
	First, we show that the optimal policy for GRBP can be computed in a straightforward manner: there exists a threshold state above which it is always optimal to take action $C$ and on or below which it is always optimal to take action $F$. 
	Then, we propose an online learning algorithm for the learner, and bound its regret for two different regions that the actual state transition probabilities can lie in. The regret is bounded (finite) in one region, while it is logarithmic in the number of rounds in the other region. 
	These bounds are problem-specific, in the sense that they are functions of the state transition probabilities.
	Finally, we illustrate the behavior
	  of the regret as a function of the state transition probabilities through numerical experiments.

	\begin{figure}[t]
		\centering
		\includegraphics[scale=0.4]{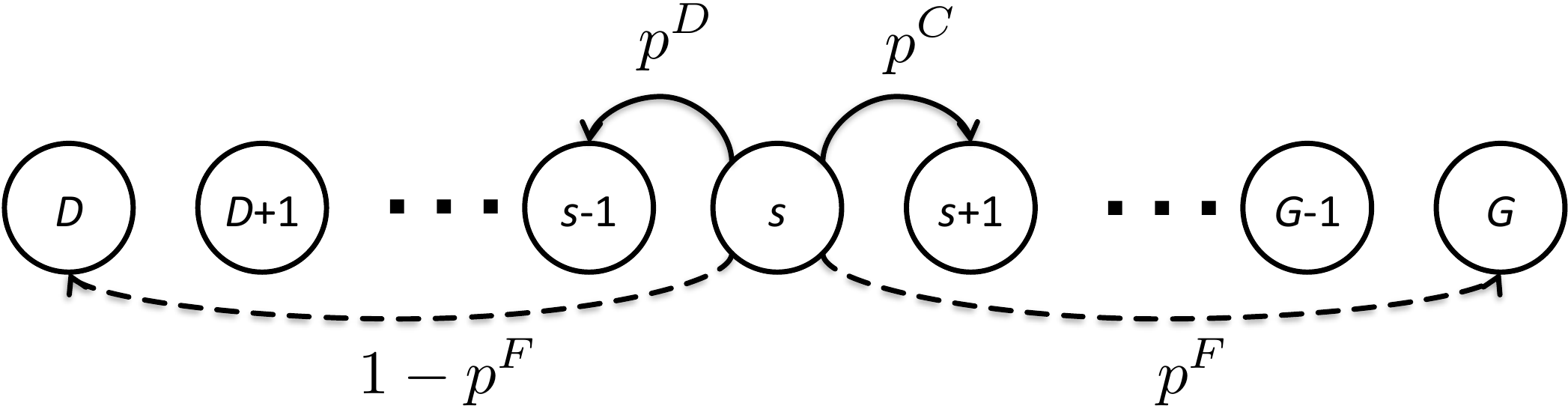}
		\caption{State transition model of the GRBP. Only state transitions out of state $s$ are shown. Dashed arrows correspond to possible state transitions by taking action $F$, while solid arrows correspond to possible state transitions by taking action $C$. Weights on the arrows correspond to state transition probabilities. The state transition probabilities for all other non-terminal states are the same as state $s$.} \label{fig:GRBP}
	\end{figure} 
	
	The contributions of this paper can be summarized as follows:
	\begin{itemize}
		\item We define a new MAB problem, called GRBP, in which the learner takes a sequence of actions in each round with the objective of reaching to the goal state.
		\item We show that using conventional MAB algorithms such as UCB1 \cite{auer2002gdmdp} in GRBP by enumerating all deterministic Markov policies is very inefficient and results in high regret. 
		\item We prove that the optimal policy for GRBP has a threshold form and the value of the threshold can be calculated in a computationally efficient way.
		\item We derive bounds on the regret of the learner with respect to an omnipotent oracle that acts optimally. Unlike conventional MAB where the regret growth is at least logarithmic in the number of rounds \cite{lai1985aeaar}, in GRBP regret can be either logarithmic or bounded, based on the values of the state transition probabilities. We explicitly characterize the region of state transition probabilities in which the regret is bounded.
	\end{itemize}
	Remainder of the paper is organized as follows. Related work is given in Section \ref{sec:related}.
	GRBP is defined in Section \ref{sec:problem}. Form of the optimal policy for the GRBP is given in Section \ref{sec:structure}. The learning algorithm for GRBP is given in Section \ref{sec:algorithm} together with its regret analysis. 
	Numerical results are shown in Section \ref{sec:numeric}. Conclusion is given in Section \ref{sec:Conclusion}. 
	\section{Related Work} \label{sec:related} 
	\subsection{Gambler's Ruin Problem}
	If action $F$ is removed from the GRBP, it becomes the Gambler's Ruin Problem. 
	In the model of Hunter et al. \cite{hunter2008grcw} of the Gambler's Ruin Problem, in addition to the standard outcome of moving one state to the left or right, two extra outcomes are also considered. One outcome changes the state immediately to $G$, while the other outcome changes the state immediately to $D$. These outcomes are referred to as {\em Windfall} and {\em Catastrophe} outcomes, respectively. 
	The ruin and winning probabilities and the duration of the game are calculated based on these additional outcomes. 
	In another model \cite{uem2013mmmgrp}, modifications such as the chance of absorption in states other than $G$ and $D$ and staying in the same state are considered. The ruin and winning probabilities are calculated according to the proposed state transition model. 
	Unlike GRBP which is an MDP, the Gambler's Ruin Problem is a Markov chain. Moreover, the ruin and winning probabilities in the models above can be calculated exactly since the transition probabilities are assumed to be known.

	\subsection{MDPs}
	GRBP is closely related to goal oriented MDPs and stochastic shortest path problems \cite{bertsekas1995DP}. For these problems, in each state (or time epoch), an action has to be taken with the aim of reaching to the goal state ($G$) with minimum cost. 
	For this task, the optimal policy have to be determined beforehand using the set of known transition probabilities.
	Recently, progress has been made in obtaining solutions for MDPs that have dead-end ($D$) states in addition to goal ($G$) states \cite{teichteil2012sssp, Kolobov12MDPGD}. These solutions require value iteration and heuristic search methods to be performed using the  knowledge of transition probabilities. To the best of our knowledge, a reinforcement learning algorithm that works without knowing the transition probabilities a priori and that achieves logarithmic regret bounds, has not been developed yet for these problems. 
	
	Reinforcement learning in MDPs is considered by numerous researchers \cite{tewari2008, auer2009near}. In these works, it is assumed that the underlying MDP is unknown but ergodic, i.e., it is possible to reach from any state to all other states with a positive probability under any policy. These works adopt the principle of optimism under uncertainty to choose an action that maximizes the expected reward among a set of MDP models that are consistent with the estimated transition probabilities. Unlike these works, in GRBP (i) the MDP is not ergodic, and (ii) the reward is obtained only in the terminal state and not after each chosen action.
	
	\subsection{Multi-armed Bandits} 
	
	Over the last decade many variations of the MAB problem is studied and many different learning algorithms are proposed, including Gittins index \cite{gittins1974isdexp}, upper confidence bound policies (UCB-1, UCB-2, Normalized UCB, KL-UCB) \cite{auer2002gdmdp,auer2010ucb,garivier2011klucb}, greedy policies ($\epsilon$-greedy algorithm) \cite{auer2002gdmdp} and Thompson sampling \cite{thompson2012tsmab} (see \cite{bubeck2012regmab} for a comprehensive analysis of the MAB problem). The performance of a learning algorithm for a MAB problem is computed using the notion of regret. For the stochastic MAB problem \cite{lai1985aeaar}, the regret is defined as the difference between the total (expected) reward of the learning algorithm and an {\em oracle} which acts optimally based on complete knowledge of the problem parameters. It is shown that the regret grows logarithmically in the number of rounds for this problem. 
	
	GRBP can be viewed as a MAB problem in which each arm corresponds to a policy. Since the set of possible deterministic policies for the GRBP is exponential in the number of states, it is infeasible to use algorithms developed for MAB problems to directly learn the optimal policy by experimenting with different policies over different rounds. In addition, GRBP model does not fit into the combinatorial models proposed in prior works \cite{cesa2012combinatorial}. Due to these differences, existing MAB solutions cannot solve GRBP in an efficient way. Therefore, a new learning methodology that exploits the structure of the GRBP is needed.
	\section{Problem Formulation} \label{sec:problem}
	\subsection{Definition of the GRBP}
	In the GRBP, the system is composed of a finite set of states ${\cal S} := \{ D , 1, \ldots, G \}$, where integer $D = 0$ denotes the {\em dead-end} state and $G$ denotes the {\em goal} state. 
	The set of {\em initial} (starting) states is denoted by $\tilde{ {\cal S} } := \{ 1, \ldots, G-1 \}$.
	The system operates in rounds ($\rho = 1,2,\ldots$).
	The initial state of each round is drawn from a probability distribution $q(s), s \in \tilde{{\cal S}}$ over the set of initial states $\tilde{ {\cal S} }$, \rev{such that $1-q(1)>0$}.
	The current round ends and the next round starts when the learner hits state $D$ or $G$. 
	Because of this, $D$ and $G$ are called {\em terminal states}. All other states are called non-terminal states.
	Each round is divided into multiple time slots in which the learner takes an action in each time slot from the action set ${\cal A} := \{ C, F \}$ with the aim of reaching to state $G$.
	Here, $C$ denotes the continuation action and $F$ is the terminal action.
	According to Fig. \ref{fig:GRBP}, action $C$ moves the learner one state to the right or to the left of the current state.
	Action $F$ moves the learner directly to one of the terminal states.
	Possible outcomes of each action in a non-terminal state $s$ is shown in Fig. \ref{fig:GRBP}. 
	Let $s^\rho_t$ denote the state at the beginning of the $t$th time slot of round $\rho$ and \rev{$a^{\rho}_t$ denote the action taken at the $t$th time slot of round $\rho$.}
	The state transition probabilities for action $C$ are given by 
	\begin{align}
		& \Pr ( s^\rho_{t+1} = s+1 | s^\rho_{t} = s, \rev{a^{\rho}_t = C}) = p^C, ~ t \geq 1, ~ s \in \tilde{ {\cal S} } \nonumber \\
		& \Pr ( s^\rho_{t+1} = s-1 | s^\rho_{t} = s, \rev{a^{\rho}_t = C} ) = p^D, ~ t \geq 1, ~ s \in \tilde{ {\cal S} } \nonumber
	\end{align} 
	where $ p^C + p^D = 1$. 
	The state transition probabilities for action $F$ are given by 
	\begin{align}
		& \Pr ( s^\rho_{t+1} = G | s^\rho_{t} = s, \rev{a^{\rho}_t = F}  ) =  p^{F}, ~ t \geq 1, ~ s \in \tilde{ {\cal S} }  \nonumber \\
		& \Pr ( s^\rho_{t+1} = D | s^\rho_{t} = s, \rev{a^{\rho}_t = F}  ) = 1 - p^{F}, ~ t \geq 1, ~ s \in \tilde{ {\cal S} } \nonumber
	\end{align} 
	where $0 < p^{F} < 1$. If the state transition probabilities are known, each round can be modeled as a MDP and an optimal policy can be found by dynamic programming \cite{bellman1957DP,bertsekas1995DP}. 
	\subsection{Value Functions, Rewards and the Optimal Policy}
	Let $\pi = (\pi_1, \pi_2, \ldots)$, where $\pi_t : \tilde{ {\cal S} }   \rightarrow {\cal A}$, $t \geq 1$ represent a deterministic Markov policy. $\pi$ is a stationary policy if $\pi_t = \pi_{t'}$ for all $t$ and $t'$. For this case we will simply use $\pi : \tilde{ {\cal S} }   \rightarrow {\cal A}$ to denote a stationary deterministic Markov policy.
	Since the time horizon is infinite within a round and the state transition probabilities are time-invariant, it is sufficient to search for the optimal policy within the set of stationary deterministic Markov policies, which is denoted by $\Pi$.
	Let $V^{\pi}(s)$ denote the probability of reaching to $G$ by using policy $\pi$ given that the system is in state $s$. Let $Q^{\pi}(s,a)$ denote the probability of reaching to $G$ by taking action $a$ in state $s$, and then continuing according to policy $\pi$.
	We have
	\begin{align}
		& Q^{\pi}(s, C ) = p^C V^{\pi}( s+1 ) + p^D V^{\pi}( s-1 ), \notag \\
		& Q^{\pi}(s, F ) = p^{F} \notag
	\end{align}
	for $s \in \tilde{ {\cal S} } $. 
	Hence, $V^{\pi}(s)$, $s \in \tilde{ {\cal S} }$ can be computed by solving the following set of equations: 
	\begin{align}
		V^{\pi}(G) = 1, ~
		V^{\pi}(D) = 0, ~ V^{\pi}(s) = Q^{\pi}(s, \pi(s) ), ~ \forall s \in \tilde{ {\cal S} } \nonumber
	\end{align}
	where $\pi(s)$ denotes the action selected by $\pi$ in state $s$.
	The value of policy $\pi$ is defined as 
	\begin{align}
		V^{\pi} :=  \sum_{s \in \tilde{ {\cal S} }}  q(s) V^{\pi}(s) . \notag
	\end{align}
	
	The optimal policy is denoted by \begin{equation}
		\pi^* := \argmax_{\pi \in \Pi} V^{\pi} \nonumber
	\end{equation}
	and the value of the optimal policy is denoted by 
	\begin{equation}
		V^{*} := \max_{\pi \in \Pi} V^{\pi}. \nonumber
	\end{equation}
	The optimal policy is characterized by Bellman optimality equations for all $s \in \tilde{S}$
	\begin{align} \label{eqn:bellval}
		V^{*}(s) &= \max \{ p^F V^*(G) , p^C V^{*}(s+1) +  p^D V^{*}(s-1)     \}, \notag \\
		& = \max \{ p^F , p^C V^{*}(s+1) +  p^D V^{*}(s-1) \}.
	\end{align}
	As it is sufficient to search for the optimal policy within stationary deterministic Markov policies and since there are only two actions that can be taken in each $s \in \tilde{ {\cal S} }$, the number of all such policies is $2^{G-1}$.
	In Section \ref{sec:structure}, we will prove that the optimal policy for GRBP has a simple threshold form, which reduces the number of policies to learn from $2^{G-1}$ to $2$.
	\subsection{Online Learning in the GRBP}
	As we described in the previous subsection, when the state transition probabilities are known, optimal solution and its probability of reaching to the goal can be found by solving Bellman optimality equations. 
	When the learner does not know $p^C$ and $p^{F}$, the optimal policy cannot be computed a priori, and hence needs to be learned. 
	We define the learning loss of the learner, who is not aware of the optimal policy a priori, with respect to an oracle, who knows the optimal policy from the initial round, as the regret given by 
	\begin{align}
		\text{Reg}(T) := TV^* - \sum_{ \rho= 1 }^{ T } V^{ \hat{\pi}_\rho } \nonumber
	\end{align} 
	where $\hat{\pi}_\rho$ denotes the policy that is used by the learner in round $\rho$. Let $N_{\pi}(T)$ denote the number of times policy $\pi$ is used by the learner by round $T$. 
	For any policy $\pi$, let $\Delta_{\pi} := V^* - V^{\pi}$ denote the suboptimality gap of that policy. The regret can be rewritten as 
	\begin{align}
		\text{Reg}(T) &= \sum_{\pi \in \Pi}  N_{\pi}(T) \Delta_{\pi} .  \label{eqn:RegretDefinf}
	\end{align}
	
	In this paper, we will design learning algorithms that minimize the growth rate of the expected regret, i.e., $\mathbb{E} [ \text{Reg}(T) ]$. 
	A straightforward way to do this will be to employ UCB1 algorithm \cite{auer2002gdmdp} or its variants \cite{auer2010ucb} by taking each policy as an arm. The result below state a logarithmic bound on the expected regret when UCB1 is used.
	\begin{theorem} \label{thm:UCB1Reg}
		When UCB1 in \cite{auer2002gdmdp} is used to select the policy to follow at the beginning of each round (with set of arms $\Pi$), we have 
		\begin{equation}
			\mathbb{E} [ \text{Reg}(T) ] = 8 \sum_{\pi : V^{\pi} < V^*} 
			\dfrac{\log T}{\Delta_{\pi}} + \left (1+\dfrac{\pi^2}{3} \right) \sum_{\pi \in \Pi} \Delta_{\pi}. \nonumber
		\end{equation}
	\end{theorem}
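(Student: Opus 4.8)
The plan is to observe that this statement is an immediate corollary of the standard UCB1 regret bound (Theorem~1 of \cite{auer2002gdmdp}), once the GRBP with policies-as-arms is recast as an ordinary stochastic bandit satisfying the hypotheses of that theorem. Those hypotheses are: a finite arm set, rewards that are independent and identically distributed across plays of the same arm, and rewards bounded in $[0,1]$. So the entire proof reduces to checking these three conditions and then reading off the constants; there is no genuine analytic content beyond the reduction itself.

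First I would fix the arm set to be $\Pi$, so that there are $|\Pi| = 2^{G-1}$ arms, and identify the reward of arm $\pi$ in round $\rho$ with the Bernoulli indicator $R^\pi_\rho := \mathbf{1}\{\text{state } G \text{ is reached in round } \rho \text{ under } \pi\}$. I would note that every round terminates almost surely in finite time: from any non-terminal state, action $F$ reaches a terminal state in one step, and action $C$ drives a birth--death walk on the finite line $\{D,1,\ldots,G\}$ with absorbing ends, which is absorbed with probability one under any stationary policy. Hence $R^\pi_\rho$ is almost surely well-defined and takes values in $\{0,1\} \subseteq [0,1]$, which supplies the boundedness requirement with range $1$.

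Second I would verify the i.i.d.\ property. The initial state $s^\rho_1$ is drawn independently from the fixed distribution $q$ at the start of every round, and the one-step probabilities $p^C, p^D, p^F$ are time-invariant and independent of $\rho$. Consequently, after fixing a stationary policy $\pi$, the rewards $\{R^\pi_\rho\}_\rho$ are i.i.d.\ Bernoulli variables whose common mean is exactly
\begin{align}
\mathbb{E}[R^\pi_\rho] = \sum_{s \in \tilde{{\cal S}}} q(s) V^{\pi}(s) = V^{\pi}, \nonumber
\end{align}
the value of the policy defined earlier. Thus arm $\pi$ has expected reward $V^\pi$, the best arm has expected reward $V^* = \max_{\pi} V^\pi$, and the per-arm suboptimality gap coincides with $\Delta_\pi = V^* - V^\pi$ appearing in \eqref{eqn:RegretDefinf}.

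With all three hypotheses verified and $n = T$ plays, I would invoke the UCB1 bound directly, which for $[0,1]$-valued rewards upper bounds $\mathbb{E}[\mathrm{Reg}(T)]$ by $8 \sum_{\pi: V^\pi < V^*} (\log T)/\Delta_\pi + (1 + \pi^2/3)\sum_{\pi}\Delta_\pi$, matching the claimed expression term by term (here $\pi^2$ denotes the mathematical constant, not a policy). The ``hard part'' is therefore not a technical difficulty but a conceptual caveat worth flagging: the sum runs over the exponentially many policies in $\Pi$, so although the bound is logarithmic in $T$, its leading constant scales like $2^{G-1}$. This exponential blow-up is exactly what makes naive enumeration inefficient and motivates the threshold-exploiting algorithm of the later sections.
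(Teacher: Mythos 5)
Your proposal is correct and takes essentially the same route as the paper, whose entire proof is the citation ``See \cite{auer2002gdmdp}''; you simply make explicit the reduction the paper leaves implicit (rounds terminate almost surely, per-round rewards under a fixed stationary policy are i.i.d.\ Bernoulli in $[0,1]$ with mean $V^\pi$, so the gaps coincide with $\Delta_\pi$), which is exactly what is needed to invoke the UCB1 theorem. One minor point you handle correctly that the paper states sloppily: the UCB1 result is an upper bound, so the ``$=$'' in the theorem statement should be ``$\leq$'', as your invocation reflects.
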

	\begin{proof}
		See \cite{auer2002gdmdp}.
	\end{proof}
	As shown in Theorem \ref{thm:UCB1Reg}, the expected regret of UCB1 depends linearly on the number of suboptimal policies. For GRBP, the number of policies can be very large.
	For instance, we have $2^{G-1}$ different stationary deterministic Markov policies for the defined problem. These imply that using UCB1 to learn the optimal policy is highly inefficient for the GRBP.
	The learning algorithm we propose in Section \ref{sec:algorithm} exploits a result on the form of the optimal policy that will be derived in Section \ref{sec:structure} to learn the optimal policy in a fast manner. This learning algorithm calculates an estimated optimal policy using the estimated transition probabilities, and hence learns much faster than applying UCB1 naively. Moreover, it can even achieve bounded regret (instead of logarithmic regret) under some special cases.
	\section{Form of the Optimal Policy} \label{sec:structure}
	In this section, we prove that the optimal policy for GRBP has a threshold form. The value of the threshold depends only on the state transition probabilities and the number of states.
	%
	First, we give the definition of a {\em stationary threshold} policy. 
	\begin{definition}
		$\pi$ is a stationary threshold policy if there exists $\tau \in \{0, 1, \ldots, G-1\}$ such that
		$\pi(s) = C$ for all $s > \tau$ and $\pi(s) = F$ for all $s \leq \tau$. We use $\pi^{tr}_{\tau}$ to denote the stationary threshold policy with threshold $\tau$. The set of stationary threshold policies is given by $ \Pi^{tr} := \{\pi^{tr}_\tau\}_{\tau = \{0, 1, \ldots, G-1\}}$.
	\end{definition}
	The next lemma constrains the set of policies that the optimal policy lies in.
	\begin{lemma} \label{lemma:GINFreduction}
		In the GRBP it is always optimal to select action $C$ at
		$s \in \tilde{\cal{S}}-\{1\}$.
	\end{lemma}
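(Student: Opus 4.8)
The plan is to exploit the universal lower bound that the terminal action $F$ provides at every non-terminal state, and then to observe that for $s\geq 2$ both neighbors of $s$ inherit this bound. Concretely, I would first record an elementary consequence of the Bellman optimality equation \eqref{eqn:bellval}: since action $F$ is always available and yields $Q^{*}(s,F)=p^F$, we have
\begin{align}
V^{*}(s)=\max\{p^F,\,p^C V^{*}(s+1)+p^D V^{*}(s-1)\}\geq p^F \nonumber
\end{align}
for every $s\in\tilde{\cal S}$. In words, the optimal probability of reaching $G$ from any non-terminal state is at least $p^F$, because one can always just play $F$.

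Next I would fix an arbitrary $s\in\tilde{\cal S}-\{1\}$, i.e.\ $s\in\{2,\ldots,G-1\}$, and lower bound the two neighboring values appearing in $Q^{*}(s,C)$. The left neighbor satisfies $s-1\geq 1$, so $s-1\in\tilde{\cal S}$ is non-terminal and the bound above gives $V^{*}(s-1)\geq p^F$. The right neighbor satisfies $s+1\leq G$; if $s+1\leq G-1$ it is again non-terminal with $V^{*}(s+1)\geq p^F$, whereas if $s+1=G$ then $V^{*}(G)=1\geq p^F$. In either case $V^{*}(s+1)\geq p^F$.

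Combining these, the continuation $Q$-value is a convex combination (recall $p^C+p^D=1$) of two quantities each at least $p^F$:
\begin{align}
Q^{*}(s,C)=p^C V^{*}(s+1)+p^D V^{*}(s-1)\geq (p^C+p^D)\,p^F=p^F=Q^{*}(s,F). \nonumber
\end{align}
Hence $C$ attains the maximum in \eqref{eqn:bellval} at $s$, so it is optimal to take $C$ at every $s\in\tilde{\cal S}-\{1\}$, which is the claim.

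I do not expect a genuine obstacle here; the only point requiring care is the boundary of the argument, which also pinpoints why the statement must exclude $s=1$. At $s=1$ the left neighbor is the dead-end state $D$ with $V^{*}(D)=0<p^F$, so the convex-combination bound fails and $Q^{*}(1,C)=p^C V^{*}(2)$ may fall below $p^F$. This is precisely the state at which the terminal action can be strictly preferable, and it is what leaves room for a nontrivial threshold $\tau\in\{0,1\}$ in the subsequent structural result.
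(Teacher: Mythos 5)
Your proof is correct, and it takes a genuinely different route from the paper's. The paper argues by propagation and contradiction: supposing $V^{*}(s)=p^F$ at some $s\in\tilde{\cal S}-\{1\}$, it shows via \eqref{eqn:bellval} and the bound $V^{*}(\cdot)\geq p^F$ that this equality spreads to all of $\tilde{\cal S}-\{1\}$, in particular to $G-1$, where $V^{*}(G-1)=\max\{p^F,\,p^C+p^Dp^F\}=p^F$ forces $p^F\geq 1$; hence for $p^F<1$ one gets the \emph{strict} inequality $V^{*}(s)>p^F$ for all $s\geq 2$, so every optimal policy must play $C$ there. Your argument instead is a one-step forward bound: both neighbors of any $s\geq 2$ satisfy $V^{*}\geq p^F$ (using $V^{*}(G)=1$ at the right boundary), so $Q^{*}(s,C)=p^C V^{*}(s+1)+p^D V^{*}(s-1)\geq(p^C+p^D)p^F=p^F=Q^{*}(s,F)$, and $C$ attains the maximum in \eqref{eqn:bellval}. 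This is shorter, needs no case analysis or the trivial-case caveat $p^F=1$ (your inequality holds even then), and it cleanly isolates why $s=1$ is excluded, namely $V^{*}(D)=0$ breaks the convex-combination bound. What you prove is weak optimality of $C$ (ties are not excluded), whereas the paper's version buys strictness: except when $p^F=1$, \emph{every} optimal policy selects $C$ on $\tilde{\cal S}-\{1\}$, so the reduction to $\{\pi^{tr}_0,\pi^{tr}_1\}$ is exhaustive rather than merely guaranteeing that this set contains an optimal policy. Since the lemma as stated, Theorem \ref{thm:tauinf}, and the regret analysis only require the existence of an optimal threshold policy, your weaker conclusion fully suffices; both proofs also share the same key ingredient, the lower bound \eqref{eqn:Vlower}, and both implicitly rely (at the paper's own level of rigor) on greedy-with-respect-to-$V^{*}$ policies being optimal in this undiscounted absorbing MDP.
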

	\begin{proof}
		By \eqref{eqn:bellval}, for $s \in \tilde{\cal S}-\{1\}$ we have
		\begin{align}
			V^{\ast}( s ) = \max \{p^F, p^C V^{\ast}( s+1 ) + p^D V^{\ast}( s - 1 ) \}. \nonumber
		\end{align}
		If $V^{\ast}( s ) = p^F$, this implies that
		\begin{align} \label{eqn:maxfunc}
			& p^C V^{\ast}( s+1 ) + p^D V^{\ast}( s - 1 ) \leq p^F \Rightarrow \nonumber \\
			& V^{\ast}( s-1 ) \leq \frac{ p^F - p^C V^{\ast}(s+1) } { p^D } .
		\end{align}
		By definition,
		\begin{align}
			p^F \leq V^{\ast}(s), \forall s\in \tilde{ {\cal S} } . \label{eqn:Vlower}
		\end{align}
		Therefore,
		\begin{align}
			\frac{ p^F - p^CV^{\ast}( s+1 ) }{ p^D } 
			& \leq \frac{ p^F - p^C p^F }{ p^D } = p^F \nonumber
		\end{align}
		which in combination with \eqref{eqn:maxfunc} implies that $V^{\ast}( s-1 ) \leq p^F$. 
		According to  
		\eqref{eqn:Vlower} we find that $V^{\ast}( s-1 ) = p^F$.
		Then, we conclude that
		\begin{align}
			V^{\ast}( s ) = p^F \Rightarrow V^{\ast}( s-1 ) = p^F , \forall s \in \tilde{\cal S}-\{1\} .  \notag
		\end{align} 
		This also implies that
		\begin{equation}
			V^{\ast}( s+1 ) \leq \frac{p^F - p^DV^{\ast}( s-1 )}{p^C} 
			= p^F . \nonumber
		\end{equation}
		Consequently, if $V^{\ast}( s ) = p^F \text{ for some } s \in \tilde{\cal S}-\{1\}$, then
		\begin{align}
			V^{\ast}( s ) = p^F,  \forall s \in \tilde{\cal S}-\{1\}  .  \label{eqn:contradict}
		\end{align}
		By \eqref{eqn:contradict}, if $V^{\ast}( s ) = p^F$ for some 
		$s \in \tilde{\cal S}-\{1\}$, then this implies that $V^{\ast}( G-1 ) = p^F$. Since $V^{\ast}( G ) = 1$, we have
		\begin{align}
			V^*(G-1) & = \max \{ p^F, p^C + p^D p^F \} = p^F \notag \\
			& \Rightarrow p^F \geq p^C + p^D p^F \notag \\
			& \Rightarrow p^F (1 - p^D  ) \geq p^C \Rightarrow p^F \geq 1 \Rightarrow p^F = 1 . \notag
		\end{align}
		This shows that unless $p^F = 1$, it is suboptimal to select action $F$ in states $\tilde{\cal S}-\{1\}$ and since $p^F = 1$ is a trivial case, we disregard that. Hence, it is always optimal to select action $C$ at $s \in \tilde{\cal{S}}-\{1\}$.
	\end{proof}
	The result of Lemma \ref{lemma:GINFreduction} holds independently from the set of transition probabilities and the number of states. 
	Lemma \ref{lemma:GINFreduction} leaves out only two candidates for the optimal policy. 
	The first candidate is the policy which selects action $C$ at any state $s \in \tilde{\cal{S}}$. The second candidate selects action $C$ in all states except state $1$.
	Hence, the optimal policy is always in set $\{\pi^{tr}_0, \pi^{tr}_1\}$.
	This reduces the set of policies to consider from $2^{G-1}$ to $2$.
	Let $r := p^{D}/p^{C}$ denote the {\em failure ratio} of action $C$. The next lemma gives the value functions for $\pi^{tr}_1$ and $\pi^{tr}_0$.
	\begin{lemma} \label{lemma:ExpRew}
		In the GRBP we have \\
		(i) $ V^{\pi^{tr}_1}(s) =
		\begin{cases}
		p^F + (1-p^F)\dfrac{1-r^{s-1}}{1-r^{G-1}}, ~ 
		& \text{when} ~ r \neq 1 \notag \\
		p^F + (1-p^F)\dfrac{s-1}{G-1}, ~ 
		& \text{when} ~ r = 1 \notag	 
		\end{cases} $ \\
		(ii) $V^{\pi^{tr}_0}(s) =
		\begin{cases}
		\dfrac{1-r^{s}}{1-r^{G}}, ~ & \text{when} ~~ r \neq 1 \notag\\
		\dfrac{s}{G}, ~ & \text{when} ~~ r = 1 \notag
		\end{cases}$ \\
		for $s \in \tilde{\cal S}$. 
	\end{lemma}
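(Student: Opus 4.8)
The plan is to recognize both value functions as absorption probabilities of a birth–death chain and to solve the associated second-order linear recurrences in closed form. For both threshold policies, at every state where action $C$ is taken the value function satisfies the balance equation
\begin{equation}
V(s) = p^C V(s+1) + p^D V(s-1), \nonumber
\end{equation}
which I rewrite as $p^C V(s+1) - V(s) + p^D V(s-1) = 0$. The associated characteristic equation $p^C x^2 - x + p^D = 0$ factors as $(x-1)(p^C x - p^D) = 0$ (since $p^C+p^D=1$ forces $x=1$ to be a root), so its roots are $1$ and $r = p^D/p^C$. Hence the general solution is $V(s) = A + B r^s$ when $r \neq 1$, and $V(s) = A + B s$ in the degenerate case $r=1$ of a double root. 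The whole proof then reduces to pinning down the constants $A,B$ from the correct boundary conditions for each policy.

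For part (ii), policy $\pi^{tr}_0$ takes action $C$ in every non-terminal state, so the recurrence holds for all $s \in \{1,\dots,G-1\}$ together with the terminal conditions $V^{\pi^{tr}_0}(0)=0$ and $V^{\pi^{tr}_0}(G)=1$; this is exactly the classical Gambler's Ruin setup. Substituting $s=0$ and $s=G$ into $A+Br^s$ gives $A+B=0$ and $A+Br^G=1$, from which $V^{\pi^{tr}_0}(s)=(1-r^s)/(1-r^G)$, and the $r=1$ branch follows analogously from $A+Bs$ to give $s/G$.

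For part (i), the only change is the lower boundary: under $\pi^{tr}_1$ action $F$ is taken at state $1$, so $V^{\pi^{tr}_1}(1)=Q^{\pi^{tr}_1}(1,F)=p^F$, while the recurrence now holds only for $s \in \{2,\dots,G-1\}$, with upper condition $V^{\pi^{tr}_1}(G)=1$. I would therefore impose $A+Br=p^F$ and $A+Br^G=1$, solve for $B=(1-p^F)/(r^G-r)$ and $A=p^F-Br$, and then simplify $A+Br^s = p^F + (1-p^F)(r^s-r)/(r^G-r)$. Factoring $r$ out of numerator and denominator turns this into the stated $p^F+(1-p^F)(1-r^{s-1})/(1-r^{G-1})$, and the $r=1$ case again drops out of the linear ansatz $A+Bs$.

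I do not expect a genuine obstacle here, since these are standard constant-coefficient recurrences. The only points requiring care are (a) using the correct effective boundary for $\pi^{tr}_1$, namely that state $1$ behaves like an absorbing-type boundary carrying value $p^F$ rather than the physical dead-end at $0$, and (b) treating $r=1$ separately, because the two characteristic roots coalesce and the exponential ansatz degenerates to a linear one. The closing algebraic simplification to match the published closed forms is routine.
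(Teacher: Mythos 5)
Your proof is correct, but it takes a genuinely different route from the paper's. The paper proves part (i) by direct manipulation of the defining equations: it rearranges them into first differences $V^{\pi^{tr}_1}(s)-V^{\pi^{tr}_1}(s+1)=r^{s-1}\bigl(p^F-V^{\pi^{tr}_1}(2)\bigr)$, telescopes the full chain to solve for $V^{\pi^{tr}_1}(2)$, and then telescopes partially to recover $V^{\pi^{tr}_1}(s)$, handling $r=1$ by substituting into the intermediate summation identities; for part (ii) it does not compute at all, instead citing the classical gambler's ruin result. You instead solve the characteristic equation $p^Cx^2-x+p^D=0$ with roots $1$ and $r$, write the general solution $A+Br^s$ (or $A+Bs$ for the double root at $r=1$), and fit the two boundary conditions for each policy — in particular the correct observation that under $\pi^{tr}_1$ the effective lower boundary is $V(1)=p^F$ rather than the physical dead-end at $0$. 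Your route is more unified (both policies and both branches of $r$ follow from one ansatz, and (ii) is re-derived rather than cited) and the algebra is shorter; its one implicit debt is well-posedness: you should note that since the recurrence propagates any two consecutive values forward, the value function agrees with the fitted ansatz by induction once $A,B$ match two points, and that the boundary-fitting system is invertible precisely because $r\neq r^G$ when $r\neq 1$ (with the linear ansatz covering the degenerate case). The paper's telescoping sidesteps any uniqueness consideration by working only with the equations the value function is defined to satisfy, at the cost of a longer chain of manipulations and an external citation for part (ii). Both arguments are complete and standard; yours is arguably the cleaner self-contained presentation.
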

	\begin{proof}
		\textbf{(i)}: 
		
		For $\pi^{tr}_1$ we have:
		\begin{align}
			& \begin{cases}
				V^{\pi^{tr}_1}(G) = 1 \\
				V^{\pi^{tr}_1}(G-1) = p^C V^{\pi^{tr}_1}(G) + p^D V^{\pi^{tr}_1}(G-2) \\
				\dots\\
				V^{\pi^{tr}_1}(2) = p^C V^{\pi^{tr}_1}(3) + p^D V^{\pi^{tr}_1}(1)\\
				V^{\pi^{tr}_1}(1)=p^F\\
			\end{cases}\nonumber\\
			\Rightarrow 
			& \begin{cases}
				(p^C+p^D)V^{\pi^{tr}_1}(G-1) = p^C+p^DV^{\pi^{tr}_1}(G-2)\\
				\dots\\
				(p^C+p^D)V^{\pi^{tr}_1}(2) = p^CV^{\pi^{tr}_1}(3)+p^Dp^F\\
			\end{cases} \nonumber\\
			\Rightarrow 
			& \begin{cases}
				p^C(V^{\pi^{tr}_1}(G-1)-1)= \\ p^D(V^{\pi^{tr}_1}(G-2) - V^{\pi^{tr}_1}(G-1))\\
				\dots\\
				p^C(V^{\pi^{tr}_1}(s+1)-V^{\pi^{tr}_1}(s+2)) 
				= \\ p^D(V^{\pi^{tr}_1}(s)-V^{\pi^{tr}_1}(s+1))\\
				\dots\\
				p^C(V^{\pi^{tr}_1}(2)-V^{\pi^{tr}_1}(3)) = p^D(p^F-V^{\pi^{tr}_1}(2))\\
			\end{cases} \nonumber\\
			\Rightarrow 
			& \begin{cases}
				V^{\pi^{tr}_1}(G-1)-1 = r^{G-2}(p^F-V^{\pi^{tr}_1}(2))\\
				\dots\\
				V^{\pi^{tr}_1}(s)-V^{\pi^{tr}_1}(s+1) = r^{s-1}(p^F-V^{\pi^{tr}_1}(2)) \\
				\dots\\
				V^{\pi^{tr}_1}(2)-V^{\pi^{tr}_1}(3) = r(p^F-V^{\pi^{tr}_1}(2))\\
			\end{cases}\Rightarrow \label{eqn:V2}
		\end{align}
		Summation of all the terms results in
		\begin{align}
			& 1-V^{\pi^{tr}_1}(2) = (V^{\pi^{tr}_1}(2)-p^F) (\sum_{i=1}^{G-2} r^i)\label{eqn:rone} \Rightarrow\\
			& V^{\pi^{tr}_1}(2)(\sum_{i=0}^{G-2} r^i) = 1 + p^F(\sum_{i=1}^{G-2} r^i) \Rightarrow \nonumber \\
			& V^{\pi^{tr}_1}(2)(\sum_{i=0}^{G-2} r^i) = 1 - p^F + p^F(\sum_{i=0}^{G-2} r^i) \Rightarrow \nonumber \\
			& V^{\pi^{tr}_1}(2) = p^F + \frac{1 - p^F}{(\sum_{i=0}^{G-2} r^i)} \Rightarrow \nonumber \\
			& V^{\pi^{tr}_1}(2) = p^F+(1-p^F)\dfrac{1-r}{1-r^{G-1}}. \notag
		\end{align}
		Then, for $s$th state, we have to sum up to $(s-1)$th equation in \eqref{eqn:V2}:
		\begin{align}
			&V^{\pi^{tr}_1}(s)-V^{\pi^{tr}_1}(2) = (V^{\pi^{tr}_1}(2)-p^F) (\sum_{i=1}^{ s-2 } r^i) \nonumber \Rightarrow \\ 
			& V^{\pi^{tr}_1}(s) = p^F + (V^{\pi^{tr}_1}(2) - p^F) (\sum_{i=0}^{ s-2 } r^i) \Rightarrow \label{eqn:rtwo} \\ 
			& V^{\pi^{tr}_1}(s) = p^F+(1-p^F)\dfrac{1-r^{s-1}}{1-r^{G-1}} . \label{eqn:value1}
		\end{align}
		For the fair case, $r$ has to be set to 1 in \eqref{eqn:rone} and \eqref{eqn:rtwo}. Then,
		\begin{align}
			V^{\pi^{tr}_1}(2) = p^F+(1-p^F)\dfrac{1}{G-1}     \notag
		\end{align}
		and
		\begin{align}
			V^{\pi^{tr}_1}(s) = p^F+(1-p^F)\dfrac{s-1}{G-1}  .    \notag
		\end{align}
		\\
		\textbf{Case (ii)}: 
		
		Since action $F$ is never selected by $\pi^{tr}_0$,
		for this case, standard analysis of the gambler's ruin problem applies. Thus, the probability of hitting $G$ from state $s$ is 
		\begin{align}
			(1-r^s)/(1-r^G)     \label{eqn:value2} 
		\end{align}
		for $r \neq 1$ and $s/G$ for $r = 1$ \cite{el-shehawey2009grfmc}.
	\end{proof}
	The form of the optimal policy is given in the following theorem.	
	\begin{theorem} \label{thm:tauinf}
		In the GRBP, the optimal policy is $\pi^{tr}_{\tau^*}$, where \\
		\begin{equation}
			\tau^* = 
			\begin{cases}
			\sgn(p^{F}- \dfrac{1-r}{1-r^{G}}), ~ & \text{ when $r\neq 1$}\notag\\
			\sgn(p^{F}-\dfrac{1}{G}), ~ & \text{ when $r = 1$}\notag
			\end{cases}
		\end{equation}
		where $\sgn(x) = 1$ if $x$ is nonnegative and $0$ otherwise.
	\end{theorem}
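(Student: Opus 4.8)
The plan is to lean on Lemma~\ref{lemma:GINFreduction}, which already narrows the optimal policy to the two candidates $\pi^{tr}_0$ and $\pi^{tr}_1$, so that proving the theorem reduces to deciding which of the two is better. Since the value of a policy $V^{\pi} = \sum_{s} q(s) V^{\pi}(s)$ depends on the arbitrary initial distribution $q$, I would establish the stronger statement that one of the two value functions dominates the other \emph{pointwise} on $\tilde{\cal S}$; this makes the comparison independent of $q$, and the sign of the domination will turn out to be exactly $\sgn\!\left(p^F - \frac{1-r}{1-r^G}\right)$.

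The key structural observation is that both $\pi^{tr}_0$ and $\pi^{tr}_1$ select the continuation action at every state $s \in \{2,\ldots,G-1\}$, so by the defining equations of the value function both $V^{\pi^{tr}_0}$ and $V^{\pi^{tr}_1}$ satisfy the \emph{same} homogeneous linear recurrence $V(s) = p^C V(s+1) + p^D V(s-1)$ on these states, together with the same terminal condition $V(G)=1$. Consequently the difference $W(s) := V^{\pi^{tr}_1}(s) - V^{\pi^{tr}_0}(s)$ obeys the same recurrence and vanishes at $s=G$. For $r \neq 1$ the solution space of the recurrence is spanned by $\{1, r^s\}$, so $W(s) = B\,(r^s - r^G)$ for a single constant $B$; for $r=1$ the general solution is affine, so $W$ is linear in $s$.

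Next I would fix the sign of $W$ from one evaluation. Since $\pi^{tr}_1$ plays $F$ in state $1$ we have $V^{\pi^{tr}_1}(1) = p^F$, while Lemma~\ref{lemma:ExpRew} gives $V^{\pi^{tr}_0}(1) = \frac{1-r}{1-r^G}$, hence $W(1) = p^F - \frac{1-r}{1-r^G}$. It then remains to verify that $\sgn W(s) = \sgn W(1)$ for every non-terminal $s$: writing $W(s) = B(r^s - r^G)$ and $W(1) = B(r - r^G)$, one checks that $r^s - r^G$ and $r - r^G$ carry the same sign for all $1 \le s \le G-1$ (positive when $r<1$, negative when $r>1$), so their ratio to $W(1)$ is positive in both regimes; the case $r=1$ is immediate since $W$ is then linear with $W(G)=0$. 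Thus $W$ has constant sign across $\tilde{\cal S}$, equal to $\sgn W(1)$.

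Finally I would conclude: if $p^F \ge \frac{1-r}{1-r^G}$ then $W(s) \ge 0$ for all $s$, so $V^{\pi^{tr}_1} \ge V^{\pi^{tr}_0}$ and $\pi^{tr}_1$ (i.e.\ $\tau^*=1$) is optimal; otherwise $V^{\pi^{tr}_0} > V^{\pi^{tr}_1}$ and $\pi^{tr}_0$ ($\tau^*=0$) is optimal, which is precisely $\tau^* = \sgn(p^F - \frac{1-r}{1-r^G})$, with the analogous comparison of $p^F$ against $1/G$ in the fair case $r=1$. The main obstacle is the sign bookkeeping in the previous paragraph: one must treat $r<1$ and $r>1$ separately because both $B$ and the factor $r^s - r^G$ flip sign across $r=1$, and it is only their product that is regime-independent.
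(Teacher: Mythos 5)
Your proposal is correct and takes essentially the same route as the paper: Lemma~\ref{lemma:GINFreduction} narrows the candidates to $\{\pi^{tr}_0,\pi^{tr}_1\}$, and the decision reduces to comparing values at $s=1$, namely $V^{\pi^{tr}_1}(1)=p^F$ versus $V^{\pi^{tr}_0}(1)=\frac{1-r}{1-r^G}$ (or $\frac{1}{G}$ when $r=1$). Your additional step---showing that $W(s)=V^{\pi^{tr}_1}(s)-V^{\pi^{tr}_0}(s)$ satisfies the common recurrence with $W(G)=0$, hence $W(s)=W(1)\,\frac{r^{s-1}-r^{G-1}}{1-r^{G-1}}$ with constant sign across $\tilde{\cal S}$---is a careful justification of the paper's terse assertion that the $s=1$ comparison suffices for an arbitrary initial distribution $q$, and it agrees exactly with the $\Delta(s)$ formula the paper later derives in Lemma~\ref{lemma:DeltaBound}.
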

	\begin{proof}	
		Since we have found in Lemma \ref{lemma:GINFreduction} that it is always optimal to select action $C$ when the state is in $\{2, \ldots, G-1 \}$, to find the optimal policy, it is sufficient to compare the value functions of the two policies for $s=1$. When $r \neq 1$, this gives  
		$\pi^* = \pi^{tr}_1$ if
		\begin{align}
			\dfrac{1-r}{1-r^G}  < p^F      \notag
		\end{align}
		and $\pi^* = \pi^{tr}_0$ otherwise.\footnote{When $(1-r)/(1-r^G)  = p^F$ both $\pi^{tr}_1$ and $\pi^{tr}_0$ are optimal. For this case, we favor $\pi^{tr}_1$ because it always ends the current round.} 
		Similarly, if $r = 1$ and $1/G < p^F$, then $\pi^* = \pi^{tr}_1$. Otherwise, $\pi^* = \pi^{tr}_0$. Using these, the value of the optimal threshold is given as  
		\begin{align}
			\tau^* = 
			\begin{cases}
				\sgn(p^F- \dfrac{1-r}{1-r^{G}}) & \text{if } r \neq 1 \notag\\
				\sgn(p^F-\dfrac{1}{G}) & \text{if } r = 1 \notag
			\end{cases}
		\end{align}
		which completes the proof.
	\end{proof}
	When $r \neq 1$, the term $(1-r) / (1-r^G)$ represents probability of hitting $G$ starting from state $1$ by always selecting action $C$. This probability is equal to $1/G$ when $r = 1$.
	Because of this, it is optimal to take the terminal action in some cases for which $p^C > p^{F}$. Although the continuation action can move the system state in the direction of the goal state for some time, the long term chance of hitting the goal state by taking the continuation action can be lower than the chance of hitting the goal state by immediately taking the terminal action at state $1$. 
	
	Equation of the boundary for which the optimal policy changes from $\pi^{tr}_0$ to $\pi^{tr}_1$ is 
	\begin{equation} \label{eqn:boundaryP}
		p^{F} = B(r) := (1-r) / (1-r^{G}) 
	\end{equation}
	when $r \neq 1$.
	This decision boundary is illustrated in Fig. \ref{fig:Boundary} for different values of $G$. We call the region of transition probabilities for which $\pi^{tr}_0$ is optimal as the {\em exploration} region, and the region for which $\pi^{tr}_1$ is optimal as the {\em no-exploration} region. In exploration region, the optimal policy does not take action $F$ in any round. Therefore, any learning algorithm that needs to learn how well action $F$ performs, needs to explore action $F$.
	As the value of $G$ increases, area of the exploration region decreases due to the fact that probability of hitting the goal state by only taking action $C$ decreases.
	\begin{figure}[h]
		\begin{center}
			\includegraphics[scale=0.27]{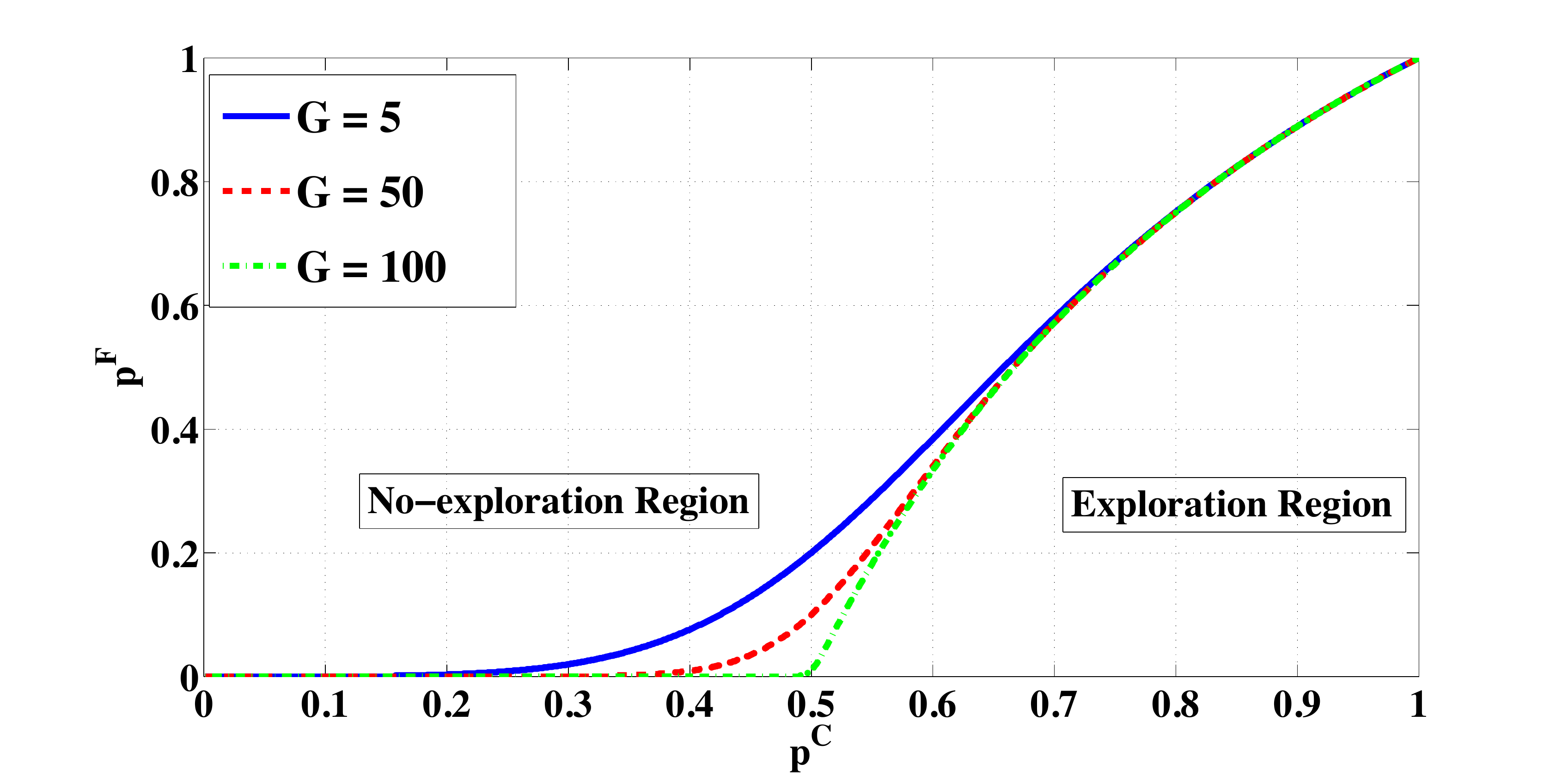}
			\caption{The boundary between explore and no-explore regions} \label{fig:Boundary}
		\end{center}
	\end{figure}
	
	
	\section{An Online Learning Algorithm and Its Regret Analysis} \label{sec:algorithm}
	In this section, we propose a learning algorithm that minimizes the regret when the state transition probabilities are unknown. 
	The proposed algorithm forms estimates of state transition probabilities based on the history of state transitions, and then, uses these estimates together with the form of the optimal policy obtained in Section \ref{sec:structure} to calculate an estimated optimal policy at each round. 
	
	\subsection{Greedy Exploitation with Threshold Based Exploration}
	The learning algorithm for the GRBP is called {\em Greedy Exploitation with Threshold Based Exploration} (GETBE) and its pseudocode is given in Algorithm \ref{alg:GRBP}.
	Unlike conventional MAB algorithms \cite{lai1985aeaar,auer2002gdmdp,auer2010ucb} which require all arms to be sampled at least logarithmically many times, GETBE does not need to sample all policies (arms) logarithmically many times to find the optimal policy with a sufficiently high probability. 
	GETBE achieves this by utilizing the form of the optimal policy derived in the previous section.
	Although GETBE does not require all policies to be explored, it requires exploration of action $F$ when the estimated optimal policy never selects action $F$. 
	This {\em forced} exploration is done to guarantee that GETBE does not get stuck in the suboptimal policy.
	
	GETBE keeps counters $N^G_F(\rho)$, $N_F(\rho)$, $N^{u}_C(\rho)$ and $N_C(\rho)$:
	(i) $N^G_F(\rho)$ is the number of times action $F$ is selected and terminal state $G$ is entered upon selection of action $F$ by the beginning of round $\rho$, (ii) $N_F(\rho)$ is the number of times action $F$ is selected by the beginning of round $\rho$,
	(iii) $N^{u}_C(\rho)$ is the number of times transition from some state $s$ to $s+1$ happened (i.e., the state moved up) after selecting action $C$ by the beginning of round $\rho$, (iv) $N_C(\rho)$ is the number of times action $C$ is selected by the beginning of round $\rho$.
	Let $T_F(\rho)$ and $T_C(\rho)$ represent the number of times action $F$ and action $C$ is selected in round $\rho$, respectively. 
	Since, action $F$ is a terminal action, it can be selected at most once in each round. However, action $C$ can be selected multiple times in the same round. 
	Let $T^G_F(\rho)$ and $T^{u}_C(\rho)$ represent the number of times state $G$ is reached after the selection of action $F$ and the number of times the state moved up after the selection of action $C$ in round $\rho$, respectively. 
	
	At the beginning of round $\rho$, GETBE forms the transition probability estimates $\hat{p}^{F}_\rho := N^G_F(\rho) / N_F(\rho)$ and 
	$\hat{p}^C_\rho := N^{u}_C(\rho) / N_C(\rho)$ that correspond to actions $F$ and $C$, respectively. 
	Then, it computes the estimated optimal policy $\hat{\pi}_{\rho}$ by using the form of the optimal policy given in Theorem 2 for the GRBP.
	If $\hat{\pi}_{\rho} = \pi^{tr}_1$, then GETBE operates in {\em greedy exploitation mode} by acting according to $\pi^{tr}_1$ for the entire round. 
	Else if $\hat{\pi}_{\rho} = \pi^{tr}_0$, then GETBE operates in {\em triggered exploration mode} and selects action $F$ in the first time slot of that round if $N_F(\rho) < D(\rho)$, where $D(\rho)$ is a non-decreasing {\em control function} that is an input of GETBE.
	This control function helps GETBE to avoid getting stuck in the suboptimal policy by forcing the selection of action $F$, although it is suboptimal according to $\hat{\pi}_{\rho}$. When $N_F(\rho) \geq D(\rho)$, GETBE employs $\hat{\pi}_{\rho}$ for the entire round. 

	At the end of round $\rho$ the values of counters are updated as follows:
	\begin{align}
		N_F(\rho+1) &= N_F(\rho) + T_F(\rho) \notag \\
		N^G_F(\rho+1) &= N^G_F(\rho) + T^G_F(\rho) \notag \\ 
		N_C(\rho+1) &= N_C(\rho) + T_C(\rho) \notag \\
		N^{u}_C(\rho+1) &= N^{u}_C(\rho) + T^{u}_C(\rho). \label{eqn:updcounts}
	\end{align}
	These values are used to estimate the transition probabilities that will be used at the beginning of round $\rho + 1$, for which the above procedure repeats. 
	In the analysis of GETBE, we will show that when $N_F(\rho) \geq D(\rho)$, the probability that GETBE selects the suboptimal policy is very small, which implies that the regret incurred is very small.
	
	\begin{algorithm}[h]
		\algsetup{linenosize= \small}
		\small
		\caption{GETBE Algorithm} \label{alg:GRBP}
		\begin{algorithmic}[1]
			\STATE $\textit{Input}: G, D(\rho)$
			\STATE $\textit{Initialize}$: Take action $C$ and then action $F$ once to form initial estimates: $N^{G}_{F}(1), N_{F}(1) = 1, N^{u}_{C}(1), N_{C}(1) = 1$ (Round(s) to form the initial estimates (at most 2 rounds) are ignored in the regret analysis). $\rho = 1$ \label{line:Init}
			\WHILE{$\rho\geq 1$}
			\STATE Get initial state $s^\rho_1 \in \tilde{ {\cal S} }$, $t=1$
			\STATE $\hat{p}^{F}_{\rho} = \dfrac{N^G_{F}(\rho)}{N_{F}(\rho)}, ~ \hat{p}^C_{\rho} = \dfrac{N^{u}_{C}(\rho)}{N_{C}(\rho)}, ~ \hat{r}_{\rho} = \dfrac{1-\hat{p}^C_{\rho}}{\hat{p}^C_{\rho}}$
			\IF {$\hat{p}^{u}_{\rho} = 0.5$}
			\STATE $\hat{\tau}_{\rho} = \text{ sign}(\hat{p}^{F}_{\rho}-1/G)$
			\ELSE
			\STATE $\hat{\tau}_{\rho} = \text{ sign}(\hat{p}^{F}_{\rho}-\dfrac{1-\hat{r}_{\rho}}{1-(\hat{r}_{\rho})^G})$
			\ENDIF
			\STATE Set $\hat{\pi}_\rho = \pi^{tr}_{\hat{\tau}_\rho}$
			\WHILE {$s^{\rho}_t \neq$ $G$ or $D$}
			\IF {$ (\hat{\pi}_\rho = \pi^{tr}_0  ~~ \&\& ~~ N_F(\rho) < D(\rho) ) ~~ || ~~  (s^{\rho}_t \leq \hat{\tau}_{\rho} ) $}
			\STATE Select action $F$, observe state $s^\rho_{t+1}$
			\STATE $T_{F}(\rho) =T_{F}(\rho) +1, ~ T^{G}_F(\rho)  = \mathbb{I} (  s^\rho_{t+1} = G ) $\footnotemark
			\ELSE
			\STATE Select action $C$, observe state $s^\rho_{t+1}$
			\STATE $T_{C}(\rho)= T_{C}(\rho)+1$
			\STATE $T^{u}_{C}(\rho) = T^{u}_{C}(\rho)+ \mathbb{I} ( s^{\rho}_{t+1} = s^{\rho}_t + 1 )$
			\STATE $t=t+1$
			\ENDIF
			\ENDWHILE
			\STATE $\text{Update the counters according to \eqref{eqn:updcounts}}$
			\STATE $\rho=\rho+1$
			\ENDWHILE	
		\end{algorithmic}
	\end{algorithm}
	\footnotetext{$\mathbb{I}(\cdot)$ denotes the indicator function which is $1$ if the expression inside evaluates true and $0$ otherwise.}
	
	\subsection{Regret Analysis}
	In this section, we bound the (expected) regret of GETBE. We show that GETBE achieves bounded regret when the unknown transition probabilities lie in no-exploration region and logarithmic (in number of rounds) regret when the unknown transition probabilities lie in exploration region.
	Based on Theorem \ref{thm:tauinf}, GETBE only needs to learn the optimal policy from the set of policies $\{ \pi^{tr}_0, \pi^{tr}_1 \}$. 
	Using this fact and taking the expectation of \eqref{eqn:RegretDefinf}, the expected regret of GETBE can be written as
	\begin{equation}
		\mathbb{E} [ \text{Reg}(T) ] = 
		\sum_{\pi \in \{  \pi^{tr}_0, \pi^{tr}_1 \}}  
		\mathbb{E} [ N_{\pi}(T) ] \Delta_{\pi}. \label{eqn:regretreduced}
	\end{equation}
	Let $ \Delta(s) :=  | V^{\pi^{tr}_1}(s) -  V^{\pi^{tr}_0}(s) |, s \in \tilde{{\cal S}} $ be the suboptimality gap when the initial state is $s$.
	For any $\pi \in \{ \pi^{tr}_0, \pi^{tr}_1 \}$, we have $ \Delta_{\pi} \leq \Delta_{\max}$, where $\Delta_\max := \max_{ s \in \tilde{{\cal S}} } \Delta(s) $.  
	The next lemma gives closed-form expressions for $\Delta(s)$ and $\Delta_{\max}$.
	\begin{lemma} \label{lemma:DeltaBound}
		We have
		\begin{align}
			& \Delta(s) =
			\begin{cases}
				\frac{G-s}{G-1} |p^F - \dfrac{1}{G}| & \text{if } r = 1 \notag \\ \frac{r^{G-1}-r^{s-1}}{r^{G-1}-1} |p^F - \dfrac{1-r}{1-r^G}| & \text{if } r \neq 1 \notag
			\end{cases} \nonumber \\
			& \text{ and } \nonumber \\
			& \Delta_{\max} =
			\begin{cases}
				|p^F - \dfrac{1}{G}| & \text{if } r = 1 \notag \\
				|p^F - \dfrac{1-r}{1-r^G}| & \text{if } r \neq 1 \notag
			\end{cases}
		\end{align}
	\end{lemma}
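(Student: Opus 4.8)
The plan is to prove both formulas by directly subtracting the two value functions obtained in Lemma \ref{lemma:ExpRew} and then maximizing the resulting expression over $s \in \tilde{\cal S} = \{1,\dots,G-1\}$. Since $\Delta(s)$ is by definition the absolute difference $|V^{\pi^{tr}_1}(s)-V^{\pi^{tr}_0}(s)|$, and closed forms for both terms are already in hand, the whole argument reduces to algebraic simplification plus a short monotonicity/sign check. I treat the cases $r=1$ and $r\neq1$ separately, mirroring the statement.

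For the fair case $r=1$, I would write $V^{\pi^{tr}_1}(s)-V^{\pi^{tr}_0}(s) = p^F + (1-p^F)\tfrac{s-1}{G-1} - \tfrac{s}{G}$, group the $p^F$-terms to expose the factor $\tfrac{G-s}{G-1}$, and reduce the remaining constant part $\tfrac{s-1}{G-1}-\tfrac{s}{G}$ to $-\tfrac{G-s}{G(G-1)}$ over the common denominator $G(G-1)$. This collapses the difference to $\tfrac{G-s}{G-1}\bigl(p^F-\tfrac1G\bigr)$; since $\tfrac{G-s}{G-1}\ge0$ on $\tilde{\cal S}$, taking absolute values gives the stated $\Delta(s)$.

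The case $r\neq1$ is the computational heart. Writing $A := \tfrac{1-r^{s-1}}{1-r^{G-1}}$ and $B := \tfrac{1-r^{s}}{1-r^{G}}$, I would express the difference as $p^F(1-A)+(A-B)$ and aim to factor out $(1-A)$ by establishing the identity $A-B=(A-1)\tfrac{1-r}{1-r^G}$. The key step is to put $A-B$ over the common denominator $(1-r^{G-1})(1-r^G)$ and show its numerator factors as $(1-r)(r^{G-1}-r^{s-1})$: after expansion the two degree-$(s+G-1)$ terms cancel and the rest collects into this product. Combined with $1-A=\tfrac{r^{G-1}-r^{s-1}}{r^{G-1}-1}$, this yields $V^{\pi^{tr}_1}(s)-V^{\pi^{tr}_0}(s) = \tfrac{r^{G-1}-r^{s-1}}{r^{G-1}-1}\bigl(p^F-\tfrac{1-r}{1-r^G}\bigr)$. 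I expect this factorization and the sign-bookkeeping around $1-r$ versus $1-r^G$ to be the main obstacle, since everything must be arranged so that the common factor $\bigl(p^F-\tfrac{1-r}{1-r^G}\bigr)$ actually appears.

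It then remains to pull the absolute value through the prefactor and maximize. I would verify $\tfrac{r^{G-1}-r^{s-1}}{r^{G-1}-1}\ge0$ for $s\in\{1,\dots,G-1\}$ and $r>0$ by noting that $r^{G-1}-r^{s-1}$ and $r^{G-1}-1$ share the same sign (both positive if $r>1$, both negative if $0<r<1$, since $s-1\le G-2<G-1$); this justifies $\Delta(s)=\tfrac{r^{G-1}-r^{s-1}}{r^{G-1}-1}\,\bigl|p^F-\tfrac{1-r}{1-r^G}\bigr|$. For $\Delta_{\max}$, rewriting the prefactor as $1-\tfrac{r^{s-1}-1}{r^{G-1}-1}$ shows it is at most $1$ (again the subtracted ratio is nonnegative because its numerator and denominator share a sign) with equality at $s=1$; likewise $\tfrac{G-s}{G-1}\le1$ with equality at $s=1$ in the fair case. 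Evaluating at $s=1$ then gives the claimed $\Delta_{\max}$ in both regimes.
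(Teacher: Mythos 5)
Your proposal is correct and follows essentially the same route as the paper: subtract the closed-form value functions from Lemma \ref{lemma:ExpRew}, factor the $r\neq 1$ difference over the common denominator $(1-r^{G-1})(1-r^G)$ so that the numerator collapses to $(1-r)(r^{G-1}-r^{s-1})$, pull out the factor $\bigl(p^F-\tfrac{1-r}{1-r^G}\bigr)$, and maximize at $s=1$. Your explicit sign checks that the prefactor lies in $[0,1]$ (numerator and denominator of $\tfrac{r^{G-1}-r^{s-1}}{r^{G-1}-1}$ sharing a sign) are a small welcome addition, since the paper asserts nonnegativity of the prefactor and maximality at $s=1$ without spelling them out.
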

	\begin{proof}
		According to Lemma \ref{lemma:ExpRew} we have \\
		\textbf{Case (i)} $r = 1$:
		\begin{align}
			\Delta(s) & = |V^{\pi^{tr}_1}(s) - V^{\pi^{tr}_0}(s)| = |p^F + (1-p^F)\frac{s-1}{G-1} - \frac{s}{G}| \nonumber \\ 
			& = |p^F(\dfrac{G-s}{G-1}) + \frac{s-1}{G-1} - \frac{s}{G}| = \dfrac{G-s}{G-1}|p^F - \frac{1}{G}| . \notag
		\end{align}
		The above equation is maximized when $s = 1$. Therefore, when $r = 1$,
		\begin{equation}
			\Delta_{\max} = \max_{ s \in \tilde{ {\cal S} } } \Delta(s) = |p^F - \frac{1}{G}| . \nonumber
		\end{equation}	
		\textbf{Case (ii)} $r \neq 1$:
		\begin{align} 
			\Delta(s) & = |V^{\pi^{tr}_1}(s) - 	V^{\pi^{tr}_0}(s)| \nonumber \\ 
			& = |p^F + (1-p^F)\dfrac{r^{s-1}-1}{r^{G-1}-1} - \frac{r^s-1}{r^G-1}| \nonumber \\ 
			& = |p^F(\dfrac{r^{G-1}-r^{s-1}}{r^{G-1}-1}) + \frac{r^{s-1}-1}{r^{G-1}-1} - \frac{r^s-1}{r^G-1}| \nonumber \\
			& = |p^F(\frac{r^{G-1}-r^{s-1}}{r^{G-1}-1}) + \frac{r^{s}-r^{s-1}+r^{G-1}-r^G}{(1-r^{G-1})(1-r^G)}| \nonumber \\
			& = (\frac{r^{G-1}-r^{s-1}}{r^{G-1}-1})|p^F - \frac{1-r}{1-r^G}| . \notag
		\end{align}
		Again, the above equation is maximized when $s = 1$. Therefore, when $r \neq 1$,
		\begin{equation}
			\Delta_{\max} = \max_{ s \in \tilde{ {\cal S} } } \Delta(s) = |p^F - \frac{1-r}{1-r^G}| . \nonumber
		\end{equation}
	\end{proof}
	\comment{
	\rev{
	The next corollary characterizes the suboptimality gap in terms of $\Delta_{\max}$ when the initial distribution over the states is uniform. The result of this corollary can be used in Theorem \ref{thm:UCB1Reg} in order to obtain a bound the regret that depends on $p^F$ and $p^C$.}
	\rev{
	\begin{corollary} \label{cor:ucbgap}
		If we assume a uniform distribution over the initial states, then the gap used in Theorem \ref{thm:UCB1Reg} is
		\begin{equation}
			\Delta_\pi = 
			\begin{cases}
				\frac{G}{2(G-1)} \Delta_{\max} & \text{if } r = 1 \notag \\
				(\frac{r^{G-1}}{r^{G-1}-1} + \frac{1}{(1-r)(G-1)})\Delta_{\max} & \text{if } r \neq 1 \notag
			\end{cases}
		\end{equation}
	\end{corollary}
	\begin{proof}
		If $r = 1$, then by usage of \eqref{eqn:deltasfair} we have
		\begin{align}
			\Delta_\pi & = \sum_{s=1}^{G-1} q(s)\Delta(s) = \frac{1}{G-1} \sum_{s=1}^{G-1} \dfrac{G-s}{G-1}\Delta_{\max} \nonumber \\
			& = \frac{\Delta_{\max}}{G-1} (\frac{G(G-1)}{G-1} - \sum_{s=1}^{G-1} \frac{s}{G-1}) \nonumber \\
			& = \frac{\Delta_{\max}}{G-1} (G - \frac{\frac{G(G-1)}{2}}{G-1}) = \frac{G}{2(G-1)}\Delta_{\max} . \nonumber
		\end{align}
		If $r \neq 1$, then we use \eqref{eqn:deltasunfair} and we get
		\begin{align}
			\Delta_\pi & = \sum_{s=1}^{G-1} q(s)\Delta(s) = \frac{1}{G-1} \sum_{s=1}^{G-1} \frac{r^{G-1}-r^{s-1}}{r^{G-1}-1}\Delta_{\max} \nonumber \\
			& = \frac{\Delta_{\max}}{G-1} (\frac{(G-1)r^{G-1}}{r^{G-1}-1} - \sum_{s=1}^{G-1} \frac{r^{s-1}}{r^{G-1}-1}) \nonumber \\
			& = \frac{\Delta_{\max}}{G-1} ( \frac{(G-1)r^{G-1}}{r^{G-1}-1} - \frac{\frac{r^{G-1}-1}{r-1}}{r^{G-1}-1} ) \nonumber \\
			& = (\frac{r^{G-1}}{r^{G-1}-1} + \frac{1}{(1-r)(G-1)}) \Delta_{\max} . \nonumber
		\end{align}
		Finally, the proof is complete.
	\end{proof}
	} 
	}
	Next, we bound $\mathbb{E} [ N_{\pi}(T) ]$ for the suboptimal policy in a series of lemmas. 
	From \eqref{eqn:boundaryP}, it is clear that the boundary is a function of $r$. Let $r = \frac{1-x}{x}$. Then, the boundary becomes a function of $x$ by which we have 
	\begin{equation}
		B(x) = ( 1-\dfrac{1-x}{x} ) /
		(1- ( \dfrac{1-x}{x} )^{G} ) \nonumber.
	\end{equation}
	Let $\delta$ be the minimum Euclidean distance of pair ($p^C, p^F$) from the boundary ($x, B(x)$) given in Fig. \ref{fig:Boundary}. 
	\comment{
	The next lemma gives the value of $\delta$ as a function of $p^C$, $p^F$ and $G$.
	\begin{lemma} \label{lemma:infGapBound}
		We have $ \delta = \sqrt{(x_0 - p^C)^2+(B(x_0) - p^F)^2} $ where $ x_0 = 1 / (1+ r_0) $ and $r_0$ is the positive, real-valued  solution of $ p^F + \frac{(\frac{1-r^G}{r+1})^2(p^C - \frac{1}{r+1})}{(G-1)r^G - Gr^{G-1} + 1} =  \dfrac{1-r}{1-r^{G}} $. 
	\end{lemma}
	\begin{proof}
		We define $L(x) = \sqrt{(x - p^C)^2+(B(x) - p^F)^2}$ as the distance between any point on the surface to the boundary. In order to find its minimum value, we set
		\begin{align}
			\frac{\, d{L(x)}}{\, d{x}} = 0
			& \Rightarrow (x - p^C)+(B(x) - p^F) \frac{\, d{B(x)}}{\, d{x}} = 0 \nonumber \\
			& \Rightarrow B(x) = p^F + \frac{p^C - x}{\frac{\, d{B(x)}}{\, d{x}}} \label{eqn:BxForm}
		\end{align}
		We use the chain rule to obtain $\frac{\, d{B(x)}}{\, d{x}}$. We have
		\begin{align}
			& {\frac{\, d{B(x)}}{\, d{r}}} = -\frac{(G-1)r^G - Gr^{G-1} + 1}{(1-r^{G})^2} \nonumber  
		\end{align}
		and
		\begin{align}
		& {\frac{\, d{r}}{\, d{x}}} = -\dfrac{1}{x^2} \notag
		\end{align}
		By the chain rule and replacing $x = 1/(1+r)$ in combination with \eqref{eqn:BxForm}, we get
		\begin{align}
			B(r) = p^F + \frac{(\frac{1-r^G}{r+1})^2(p^C - \frac{1}{r+1})}{(G-1)r^G - Gr^{G-1} + 1}. \nonumber
		\end{align}
		From the definition of the boundary region \eqref{eqn:boundaryP} we have 
		\begin{align}
			B(r) = \dfrac{1-r}{1-r^{G}} .      \notag
		\end{align}
		The value of $r_0$, and hence, the value of $x_0 = 1 / (1+r_0)$ is found by solving 
		\begin{align}
			p^F + \frac{(\frac{1-r^G}{r+1})^2(p^C - \frac{1}{r+1})}{(G-1)r^G - Gr^{G-1} + 1}
			- \dfrac{1-r}{1-r^{G}} = 0 .        \notag
		\end{align}
		Define $\tilde{ {\cal X} }$ as the solution set to the above equation, then the real-valued positive $x$'s from $\tilde{ {\cal X} }$ are set in function $L$. Then, the minimum distance to boundary region is given by $\delta = L(x_0)$ in which $x_0$ is the one which minimizes the function $L$.
	\end{proof}
	}
	The value of $\delta$ specifies the {\em hardness} of GRBP. When $\delta$ is small, it is harder to distinguish the optimal policy from the suboptimal policy. If the pair of estimated transition probabilities $(\hat{p}^{C}_{\rho}, \hat{p}^{F}_{\rho}$) in round $\rho$ lies within a ball around ($p^C, p^F$) with radius less than $\delta$, then GETBE will select the optimal policy in that round. 
	The probability that GETBE selects the optimal policy is lower bounded by the probability that the estimated transition probabilities lie in a ball centered at ($p^C, p^F$) with radius $\delta$.
	
	The following lemma provides a lower bound on the expected number of times each action is selected by GETBE. This result will be used when bounding the regret of GETBE.  
	\begin{lemma} \label{lemma:MinNumPBinf}
		\textbf{(i)} Let $p_{F, 1}$ be the probability of taking action $F$ in round $\rho$ when $\hat{\pi}_{\rho} = \pi^{tr}_1$ and $p_{C, 1}$ be the probability of taking action $C$ at least once in round $\rho$ when $\hat{\pi}_{\rho} = \pi^{tr}_1$. Then,
		\begin{align}
			p_{C, 1} & = 1 - q(1) \nonumber \\
			p_{F, 1} & = 
			\begin{cases}
			\sum_{s=1}^{G-1} { \frac{G-s}{G-1} q(s) } & \text{if } r=1 \\
			\sum_{s=1}^{G-1} { \frac{r^{s-1} - r^{G-1}}{1-r^{G-1}} q(s) } & \text{if } r\neq 1 \nonumber
			\end{cases}.
		\end{align}
		\textbf{(ii)} Let $D(\rho) := \gamma\log\rho$ where $\gamma > 1/p^2_{F, 1}$, and
		\begin{equation}
			f_a(\rho) := 
			\begin{cases}
			0.5 p_{C, 1} \rho, &  \text{for } a =C \nonumber \\
			0.5(p_{F, 1}\gamma - \sqrt{\gamma}) \log \rho, &  \text{for } a = F \nonumber
			\end{cases}
		\end{equation}
		Let $\rho^\prime_C$ be the first round in which $0.5p_{C, 1}\rho - p_{C, 1}\lceil D(\rho)\rceil - \sqrt{\rho - \lceil D(\rho)\rceil \log \rho}$ becomes positive and $\rho^\prime_F$ be the first round in which both $0.5 (p_{F, 1}\gamma - \sqrt{\gamma}) \log \rho - \sqrt{\log\rho}$ and $\rho - 2 - \lceil D(\rho) \rceil$ becomes positive. Then for $a \in \{F, C\}$ we have
		\begin{align}
			& \Pr\left( N_{a}(\rho) < f_a(\rho) \right) \leq \dfrac{1}{\rho^2}, \text{ for } \rho \geq \rho^\prime_a. \nonumber
		\end{align}
	\end{lemma}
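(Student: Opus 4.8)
The plan is to handle part (i) by a direct structural argument using the form of $\pi^{tr}_1$ together with the gambler's-ruin value functions of Lemma~\ref{lemma:ExpRew}, and then to obtain part (ii) by a martingale concentration argument whose per-round success probabilities are exactly the quantities computed in part (i). For part (i), recall that under $\pi^{tr}_1$ the learner plays $F$ only at state $1$ and $C$ at every state in $\{2,\dots,G-1\}$. Hence action $C$ is played at least once in a round precisely when the initial state is not $1$, giving $p_{C,1}=\Pr(s^\rho_1\neq 1)=1-q(1)$. For $p_{F,1}$, observe that before any $F$ is played the trajectory under $\pi^{tr}_1$ is a simple random walk with up-probability $p^C$ and absorbing barriers at $G$ and at state $1$, and $F$ is played iff this walk reaches state $1$ before $G$. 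Starting from $s$, this ruin probability equals $\tfrac{r^{s-1}-r^{G-1}}{1-r^{G-1}}$ (and $\tfrac{G-s}{G-1}$ when $r=1$); this is exactly the complementary term in the decomposition of $V^{\pi^{tr}_1}(s)=p^F+(1-p^F)\tfrac{1-r^{s-1}}{1-r^{G-1}}$ from Lemma~\ref{lemma:ExpRew} into ``walk hits $G$ first'' versus ``walk hits $1$ first and then $F$ succeeds,'' so the two are consistent. Averaging the ruin probability against the initial distribution $q$ yields the stated formula for $p_{F,1}$.

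For part (ii), I would set up a filtration $\mathcal F_{\rho}$ recording the history through round $\rho$ and note that $\hat\pi_\rho$ (hence whether round $\rho$ is a greedy, forced, or exploiting round) is $\mathcal F_{\rho-1}$-measurable, so the fresh randomness of a round (initial state and walk) is independent of the round type. Two facts then follow from part (i). First, in any round that is \emph{not} a forced-exploration round, action $C$ is selected at least once with conditional probability at least $p_{C,1}$ (surely under $\pi^{tr}_0$, which plays $C$ even at state $1$, and with probability $p_{C,1}$ under $\pi^{tr}_1$); moreover forcing can occur at most $\lceil D(\rho)\rceil$ times among the first $\rho$ rounds, since each forced round raises $N_F$ by one and forcing ceases once $N_F\geq D(\rho)$, so the number of non-forced rounds is at least $\rho-\lceil D(\rho)\rceil-2$ after discounting the $\le 2$ initialization rounds. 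Second, in any round that is a $\pi^{tr}_1$-round or a forced $\pi^{tr}_0$-round, action $F$ is selected with conditional probability at least $p_{F,1}$.

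The concentration step is then a martingale (Azuma--Hoeffding) bound. Writing $X_i$ and $Z_i$ for the indicators that $C$, respectively $F$, is played in round $i$, and masking by the $\mathcal F_{i-1}$-measurable round-type indicators, the centered partial sums are martingales with increments in $[-1,1]$, and the deviation $t=\sqrt{n\log\rho}$ makes the tail probability $e^{-2t^2/n}=\rho^{-2}$. For $C$, summing over the $\ge \rho-\lceil D(\rho)\rceil-2$ non-forced rounds gives $N_C(\rho)\ge p_{C,1}\bigl(\rho-\lceil D(\rho)\rceil\bigr)-\sqrt{(\rho-\lceil D(\rho)\rceil)\log\rho}$, which exceeds $f_C(\rho)=0.5\,p_{C,1}\rho$ exactly when $\rho\ge\rho^\prime_C$, by definition of $\rho^\prime_C$. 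For $F$, applying the same bound to the $\approx\gamma\log\rho$ rounds in which $F$ is available (each contributing conditional probability $\ge p_{F,1}$) gives $N_F(\rho)\ge p_{F,1}\gamma\log\rho-\sqrt{\gamma}\log\rho$, so $N_F(\rho)\ge 0.5(p_{F,1}\gamma-\sqrt\gamma)\log\rho=f_F(\rho)$ once $\rho\ge\rho^\prime_F$.

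The hard part will be the $F$-bound, for two reasons. First, the selection indicators are not i.i.d.\ because $\hat\pi_\rho$ is history-dependent, so a plain Chernoff bound is invalid and one must pass through martingale concentration (or a stochastic-dominance coupling by i.i.d.\ $\mathrm{Bernoulli}(p_{F,1})$ on the $\mathcal F_{i-1}$-measurable round-type events). Second, and more delicately, the number of $F$-available rounds is itself coupled to $N_F$, since forcing switches off as soon as $N_F\ge D(\rho)$; one therefore cannot fix the number of trials in advance. I would resolve this by a dichotomy at round $\rho$: either $N_F(\rho)\ge D(\rho)\ge f_F(\rho)$ already holds deterministically (as $f_F(\rho)\le \tfrac12\gamma\log\rho\le D(\rho)$), or forcing has remained active and thereby furnishes the $\approx\gamma\log\rho$ trials of success probability $\ge p_{F,1}$ needed for the Azuma step. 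After this, verifying $f_a(\rho)\le$ (concentrated lower bound) for $\rho\ge\rho^\prime_a$ is routine algebra, with the factor $\tfrac12$ and the extra $\sqrt{\log\rho}$ slack in the definition of $\rho^\prime_F$ precisely absorbing the forcing and initialization bookkeeping.
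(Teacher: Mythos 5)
Your part (i) and your action-$C$ bound coincide with the paper's own argument: the paper also gets $p_{C,1}=1-q(1)$ from the observation that $\pi^{tr}_1$ skips $C$ only when $s^\rho_1=1$, identifies $p_{F,1}$ with the ruin probability (hitting state $1$ before $G$) of a gambler's-ruin walk on $\{1,\dots,G\}$, and for $N_C(\rho)$ it likewise notes that at most $\lceil D(\rho)\rceil$ rounds can be forced, lower-bounds $N_C(\rho)$ by a sum of $\rho-\lceil D(\rho)\rceil$ Bernoulli$(p_{C,1})$ variables, and applies Hoeffding with $z=\sqrt{(\rho-\lceil D(\rho)\rceil)\log\rho}$, with $\rho'_C$ defined exactly so that the resulting bound exceeds $0.5p_{C,1}\rho$. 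Your filtration framing, with round types $\mathcal{F}_{\rho-1}$-measurable and Azuma in place of plain Hoeffding, is if anything more careful than the paper's device of conditioning on $N_1(\rho)=k$ and treating the indicators $n_a(j_i)$ as i.i.d.

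The genuine gap is the second horn of your dichotomy for the $F$-bound. The event $\{N_F(\rho)<D(\rho)\}$ does \emph{not} imply that ``forcing has remained active'': forcing is tested round by round against $D(i)=\gamma\log i$, so it switches off whenever $N_F(i)\geq D(i)$ and can switch back on later as $D$ grows --- for instance, a burst of $\pi^{tr}_1$ rounds that reach state $1$ early can hold $N_F(i)\geq D(i)$ over many intermediate rounds (no forcing, and $F$ played with probability only $p_{F,1}$ or $0$ in those rounds) while still leaving $N_F(\rho)<D(\rho)$ at the end. Counting the $F$-available trials, which mix $\pi^{tr}_1$ rounds with forced rounds, is precisely where the paper spends its effort: it conditions on $k=N_1(\rho)$, shows $N_F(\rho)$ dominates a sum of $k+\min\{\rho-k,\lceil D(\rho-k)\rceil\}$ i.i.d.\ Bernoulli$(p_{F,1})$ trials, proves $D(\rho-k)+k\geq D(\rho)$ whenever $\rho\geq k+e$ by showing $e^{(k/\gamma)/(e^{k/\gamma}-1)}\in[1,e]$ and decreasing in $k$, and handles the leftover cases $k\in\{\rho-2,\rho-1,\rho\}$ via the clause $\rho-2-\lceil D(\rho)\rceil>0$ in the definition of $\rho'_F$ --- a clause your route never invokes, which is a symptom that your argument is not actually proving the claim for the stated $\rho'_F$. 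Your dichotomy is repairable, though: any non-forced $\pi^{tr}_0$ round $i$ certifies $N_F(\rho)\geq N_F(i)\geq\gamma\log i$, so on the bad event $\{N_F(\rho)<f_F(\rho)\}$ (recall $f_F(\rho)\leq 0.5\gamma\log\rho$) every round $i\geq\sqrt{\rho}$ must be $F$-available; that yields at least $\rho-\sqrt{\rho}$ trials of conditional success probability at least $p_{F,1}$, and Azuma then gives a bound far stronger than $f_F(\rho)$ --- but only for $\rho$ beyond a threshold dictated by this new inequality, not for all $\rho\geq\rho'_F$ as the lemma literally states. As written, your proof of the $F$-bound therefore has both a false intermediate claim and a mismatch with the lemma's threshold; the corrected dichotomy above closes the first defect at the cost of accepting the second.
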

	\begin{proof}
		The following expressions will be used in the proof:
		\begin{itemize}
			\item $N_0(\rho):$ Number of rounds by $\rho$ for which $\hat{\pi}_{\rho} = \pi^{tr}_0$.
			\item $N_1(\rho):$ Number of rounds by $\rho$ for which $\hat{\pi}_{\rho} = \pi^{tr}_1$.
			\item $N_{F, 1}(\rho):$ Number of rounds by $\rho$ for which action $F$ is taken when $\hat{\pi}_{\rho} = \pi^{tr}_1$.
			\item $N_{C, 1}(\rho):$ Number of rounds by $\rho$ for which action $C$ is taken when $\hat{\pi}_{\rho} = \pi^{tr}_1$.
			\item $n_a(\rho):$ Indicator function of the event that action $a$ is selected for at least once in round $\rho$.
		\end{itemize}
		\textbf{(i)} When $\hat{\pi}_{\rho} = \pi^{tr}_1$, action $C$ is not taken only if the initial state is $1$. Hence, 
		\begin{align}
			p_{C, 1} & = 1 - \Pr( s^\rho_1 = 1)	= 1 - q(1). \notag
		\end{align} 
		Let $H_1$ denote the event that state $1$ is reached before state $G$ when $\hat{\pi}_{\rho} = \pi^{tr}_1$. We have
		\begin{align} 
			p_{F, 1} = 
			\sum_{s=1}^{G-1} { \Pr( H_1 | s^{\rho}_1 = s)  q(s) } . \notag
		\end{align}
		When $r = 1$, $p_{F, 1}$ is equivalent to the {\em ruin} probability (probability of hitting the terminal state $1$) of a fair gambler's ruin problem over $G-1$ states, where states 1 and $G$ are the terminal states. For this problem, the probability of hitting $G$ from state $s$ is $\frac{s-1}{G-1}$. Hence, probability of hitting state $1$ from state $s$ is
		\begin{align}
			\Pr( H_1 | s^{\rho}_1 = s)  = 1 - \frac{s-1}{G-1} = \frac{G-s}{G-1} . \nonumber
		\end{align}
		When $r \neq 1$, the problem is equivalent to an unfair gambler's ruin problem with $G-1$ states in which probability of hitting $G$ from state $s$ is $\frac{1-r^{s-1}}{1-r^{G-1}}$. Then, the probability of hitting state $1$ from state $s$ becomes
		\begin{align}
			\Pr( H_1 | s^{\rho}_1 = s) = 1 - \frac{1-r^{s-1}}{1-r^{G-1}} = \frac{r^{s-1} - r^{G-1}}{1-r^{G-1}} . \nonumber
		\end{align}
			
		\textbf{(ii)} 
		Since action $C$ might be selected for more than once in a round, we have $N_{a}(\rho) \geq n_{a}(1)+n_{a}(2)+\dots+n_{a}(\rho)$. This holds because in the initialization of GETBE, each action is selected once. Basically, we derive the lower bounds for $N_{a}(\rho+1)$, but these lower bounds also hold for $N_{a}(\rho)$ because of the way GETBE is initialized.
		For a set of rounds $\rho \in {\cal T} \subset \{ 1, \ldots, T \}$, $n_a(\rho)$s are in general not identically distributed. But if $\hat{\pi}_\rho$ is same for all rounds $\rho \in {\cal T}$, then $n_a(\rho)$s are identically distributed. 
			
		First, assume that $N_{1}(\rho) = k$, $0 \leq k \leq \rho$. Then, the probability that action $C$ is selected at least once in each of these $k$ rounds is $p_{C, 1}$. Let $j_i$ denote the index of the round in which the estimated optimal policy is $\pi^{tr}_1$ for the $i$th time. The sequence of Bernoulli random variables $n_C(j_i)$, $i=1,\ldots, k$ are independent and identically distributed. 
		Hence, the Hoeffding bound given in Appendix \ref{App:AppendixA} can be used to upper-bound the deviation probability of sum of these random variables from the expected sum. Since the estimated optimal policy will be $\pi^{tr}_0$ for the remaining $\rho - k$ rounds, the number of times action $F$ is selected in all of these rounds will be at most $\lceil D(\rho) \rceil$.
		Therefore, the probability of taking action $C$ is zero for at most $\lceil D(\rho) \rceil$ rounds. Let $\rho_D := \rho - \lceil D(\rho) \rceil$ and 
		$N^\prime_C(\rho)$ denote the sum of $\rho$ random variables that are drawn from an independent identically distributed Bernoulli random process with parameter $p_{C, 1}$. 
		Then, 
		\begin{align} \label{eqn:LowerNc}
			& N_{C}(\rho)  \geq \rho - k - \lceil D(\rho) \rceil + \sum_{i=1}^{k} n_C(j_i) \nonumber \\
			& \geq \sum_{i=1}^{\rho- \lceil D(\rho)\rceil} n_C(j_i) = N\textprime_C(\rho - \lceil D(\rho) \rceil) \nonumber \\
			&  =  N\textprime_C(\rho_D) .
		\end{align}
		According to the Hoeffding bound in Appendix \ref{App:AppendixA}, we have for $z > 0$
		\begin{align}
			& \Pr\left( N\textprime_C(\rho_D) - \mathbb{E} ( N\textprime_C(\rho_D) ) \leq - z \right) \leq e^{-2z^2/\rho_D} . \nonumber
		\end{align}
		When $z = \sqrt{\rho_D \log\rho}$ the above bound becomes
		\begin{align}
			& \Pr ( N\textprime_C(\rho_D) \leq \mathbb{E} ( N\textprime_C(\rho_D) ) - \sqrt{\rho_D \log\rho} ) \leq \frac{1}{\rho^2} \nonumber \\
			\Rightarrow & \Pr ( N\textprime_C(\rho_D) \leq p_{C, 1}(\rho - \lceil D(\rho) \rceil) - \nonumber \\ 
			& \sqrt{(\rho - \lceil D(\rho) \rceil) \log\rho} ) \leq \frac{1}{\rho^2} . \nonumber
		\end{align}
		Then, by using \eqref{eqn:LowerNc} we obtain
		\begin{align} 
			\Pr ( N_C(\rho) \leq p_{C, 1}(\rho - \lceil D(\rho) \rceil) - \nonumber \\ \sqrt{(\rho - \lceil D(\rho) \rceil) \log\rho} ) \leq \frac{1}{\rho^2} . \nonumber
		\end{align}
		Since $\rho^{\prime}_C$ is the first round in which $0.5p_{C, 1}\rho - p_{C, 1} \lceil D(\rho) \rceil - \sqrt{\rho_D \log\rho}$ becomes positive, on or after $\rho^{\prime}_C$, we have $p_{C, 1}\rho_D - \sqrt{\rho_D \log\rho} > 0.5p_{C, 1}\rho$. Therefore, we replace $p_{C, 1}\rho_D - \sqrt{\rho_D \log\rho}$ with $0.5p_{C, 1}\rho$ and then
		\begin{align} \label{eqn:fc}
			\Pr ( N_C(\rho) \leq 0.5p_{C, 1}\rho ) \leq \frac{1}{\rho^2}, ~ \text{for} ~ \rho \geq \rho^{\prime}_C \nonumber
		\end{align}
		which is equivalent to
		\begin{align}
			\Pr ( N_C(\rho) \leq f_C(\rho) ) \leq \frac{1}{\rho^2}, ~ \text{for} ~ \rho \geq \rho^{\prime}_C.
		\end{align}
			
		Again, assume that $N_{1}(\rho) = k$. Then, the probability of selecting action $F$ is $p_{F, 1}$ in each of these $k$ rounds. 
		Let ${\cal R}$ denote the set of the remaining $\rho-k$ rounds. For a round $\rho_r \in {\cal R}$, action $F$ is selected only if $N_F(\rho_r) \leq D(\rho_r)$.
		Among the rounds in ${\cal R}$, the number of rounds in which action $F$ is selected is bounded below by 
		$\min \{ \rho-k,  \lceil D(\rho-k) \rceil \}$.
		Then, $n_F(j_i)$, $i=1,2,\ldots$ is a sequence of i.i.d. Bernoulli random variables with parameter $p_{F,1}$. From the argument above, we obtain
		\begin{align}
			N_{F}(\rho) & \geq \min\{\rho - k, \lceil D(\rho-k) \rceil\} + \sum_{i=1}^{k} n_F(j_i) \nonumber \\
			& \geq \sum_{i=1}^{k + \min\{\rho - k, \lceil D(\rho-k) \rceil\}} n_F(j_i) \nonumber
		\end{align}
		When $\min\{\rho - k, \lceil D(\rho-k) \rceil\} = \rho - k$, we have 
		\begin{align}
			N_{F}(\rho) & \geq 
			\sum_{i=1}^{ \rho }  n_F(j_i) \geq 
			\sum_{i=1}^{ \lceil D(\rho) \rceil }  n_F(j_i), 
			\text{ for } \rho \geq \rho^\prime_F  .  \notag
		\end{align}
		When $\min\{\rho - k, \lceil D(\rho-k) \rceil\} = \lceil D(\rho-k) \rceil$, we have 
		\begin{align}
			N_{F}(\rho) & \geq  \sum_{i=1}^{ k+ \lceil D(\rho-k) \rceil }  n_F(j_i)   .  \notag
		\end{align}	
			
		Next, we will show that $D(\rho-k)+k \geq D(\rho)$ when $\rho$ is sufficiently large. 
		First, $\min\{\rho - k, \lceil D(\rho-k) \rceil\} = \lceil D(\rho-k) \rceil$ implies that 
		\begin{align}
			\rho \geq \lceil D(\rho-k) \rceil +k \geq D(\rho-k) + k . \label{eqn:condA}
		\end{align}
		Also, $D(\rho-k)+k \geq D(\rho)$ should imply that 
		\begin{align}
			& D(\rho) - D(\rho-k) \leq k \Rightarrow
			\gamma \log (\rho/(\rho-k)) \leq k \Rightarrow \nonumber \\
			& \rho/(\rho-k) \leq e^{k/\gamma} \Rightarrow \rho \geq \frac{ke^{k/\gamma}}{e^{k/\gamma}-1} \label{eqn:condB}
		\end{align}
		Using the results in \eqref{eqn:condA} and \eqref{eqn:condB}, we conclude that $D(\rho-k)+k \geq D(\rho)$ holds when 
		\begin{align}
			D(\rho-k) + k \geq \frac{ke^{k/\gamma}}{e^{k/\gamma}-1} . \label{eqn:condC}
		\end{align}
		By setting $D(\rho) = \gamma \log \rho $ and manipulating \eqref{eqn:condC} we get
		\begin{align}
			& \gamma \log (\rho-k) + k \geq \frac{ke^{k/\gamma}}{e^{k/\gamma}-1} \Rightarrow \nonumber \\
			& \gamma \log (\rho-k) \geq k(\frac{e^{k/\gamma}}{e^{k/\gamma}-1}-1) \Rightarrow \nonumber \\
			& \log (\rho-k) \geq \frac{k/\gamma}{e^{k/\gamma}-1} \Rightarrow \nonumber \\
			& \rho-k \geq e^{\frac{k/\gamma}{e^{k/\gamma}-1}} \Rightarrow \nonumber \\
			& \rho \geq k +  e^{\frac{k/\gamma}{e^{k/\gamma}-1}} . \label{eqn:condD}
		\end{align}
		First, we evaluate the term $h(k) := e^{\frac{k/\gamma}{e^{k/\gamma}-1}}$. We will show that $h(k) \in [1,e]$ for all $k \in \mathbb{R}_{+}$. By applying L'Hopital's rule we get
		\begin{align} 
			\lim_{k \rightarrow 0} \frac{k/\gamma}{e^{k/\gamma}-1}
			= \lim_{k \rightarrow 0} \frac{1/\gamma}{(1/\gamma) e^{k/\gamma}} 
			=1  \notag
		\end{align}
		and 
		\begin{align} 
			\lim_{k \rightarrow \infty} \frac{k/\gamma}{e^{k/\gamma}-1}
			=0 \notag 
		\end{align}	
		These two conditions and using the fact that exponential function is continuous we conclude that 
		\begin{align} 
			\lim_{k \rightarrow 0} e^{ \frac{k/\gamma}{e^{k/\gamma}-1} } = e^{ \lim_{k \rightarrow 0} \frac{k/\gamma}{e^{k/\gamma}-1} } = e \notag
		\end{align}	
		and
		\begin{align} 
			\lim_{k \rightarrow \infty} e^{ \frac{k/\gamma}{e^{k/\gamma}-1} } =  e^{ \lim_{k \rightarrow \infty} \frac{k/\gamma}{e^{k/\gamma}-1} } = 1 . \notag
		\end{align}
		Next, we will show that $e^{ \frac{k/\gamma}{e^{k/\gamma}-1} }$ is decreasing in $k$. 
		Since this is a monotonically increasing function of $\frac{k/\gamma}{e^{k/\gamma}-1}$, it is sufficient to show that $\frac{k/\gamma}{e^{k/\gamma}-1}$ is decreasing in $k$. 
		We have
		\begin{align}
			\frac{d}{dk} \frac{k/\gamma}{e^{k/\gamma}-1} & = \frac{\frac{1}{\gamma}(e^{k/\gamma}-1)-\frac{k}{\gamma}(e^{k/\gamma}/\gamma)}{(e^{k/\gamma}-1)^2} \nonumber
		\end{align}
		The denominator is always positive for $k >0$. Therefore, we only consider the numerator and write it as
		\begin{align} \label{eqn:sign}
			\frac{1}{\gamma}(e^{k/\gamma}-1)-\frac{k}{\gamma}(e^{k/\gamma}/\gamma) = \frac{(\gamma-k) e^{k/\gamma} - \gamma}{\gamma^2}. \notag
		\end{align}
		As the denominator is positive, we only need to show that $(\gamma-k) e^{k/\gamma} - \gamma$ is always negative. The derivative of the above expression is $-(k/\gamma)e^{k/\gamma}$, which is negative for $k > 0$. We also have $(\gamma-k) e^{k/\gamma} - \gamma = 0$ at $k = 0$.
		These two conditions imply that $(\gamma-k) e^{k/\gamma} - \gamma$ is always negative for $k >0$, by which we conclude that $e^{ \frac{k/\gamma}{e^{k/\gamma}-1} }$ is decreasing in $k$. 
		Hence, we have 
		\begin{equation}
			1 \leq e^{\frac{k/\gamma}{e^{k/\gamma}-1}} \leq e \notag
		\end{equation}
		This implies that $k + e^{\frac{k/\gamma}{e^{k/\gamma}-1}} \leq k + e$.
		Hence \eqref{eqn:condD} holds when $\rho \geq k+e$.
		This implies that when $k \leq \rho-e$, we have
		\begin{equation}
			\sum_{i=1}^{ k+ \lceil D(\rho-k) \rceil }  n_F(j_i) \geq \sum_{i=1}^{ \lceil D(\rho) \rceil }  n_F(j_i) .  \notag
		\end{equation}
		The only cases that are left out are $k=\rho$, $k=\rho-1$ and $k=\rho-2$. But we know from the definition of $\rho'_F$ that for $\rho \geq \rho'_F$, $\rho-2- \lceil D(\rho) \rceil$ is positive. Hence for these cases we also have
		\begin{equation}
			\sum_{i=1}^{ k+ \lceil D(\rho-k) \rceil }  n_F(j_i) \geq \sum_{i=1}^{ \rho-2 }  n_F(j_i) 
			\geq \sum_{i=1}^{\lceil D(\rho)\rceil } n_F(j_i) .  \notag
		\end{equation}
			
		Let $N^\prime_F(\rho)$ denote the sum over $n_F(j_i)$ for $\rho$ rounds. From all of the cases we derived above, we obtain
		\begin{align} \label{eqn:31}
			N_{F}(\rho) \geq \sum_{i=1}^{\lceil D(\rho) \rceil} n_F(j_i) = N\textprime_F(\lceil D(\rho) \rceil) \text{ for } \rho \geq \rho'_F 
		\end{align}
		Now, by using Hoeffding bound we have
		\begin{align}
			\Pr\left(N\textprime_F(\lceil D(\rho) \rceil) - \mathbb{E} ( N\textprime_F(\lceil D(\rho) \rceil) ) \leq -z \right) \leq e^{-2z^2/\lceil D(\rho) \rceil} \nonumber
		\end{align}
		and if $z = \sqrt{\lceil D(\rho) \rceil\log\rho}$ then,
		\begin{align}
			& \Pr ( N\textprime_{F}(\lceil D(\rho) \rceil) < \mathbb{E} ( N\textprime_{F}(\lceil D(\rho) \rceil) ) - \sqrt{\lceil D(\rho) \rceil\log\rho} ) \leq \frac{1}{\rho^2} \nonumber \\
			& \Pr ( N\textprime_{F}(\lceil D(\rho) \rceil) < p_{F, 1}\lceil D(\rho) \rceil - \sqrt{\lceil D(\rho) \rceil\log\rho} ) \leq \frac{1}{\rho^2} . \nonumber
		\end{align}
		By using \eqref{eqn:31}, we get
		\begin{align}
			\Pr ( N_{F}(\rho) < p_{F, 1}\lceil D(\rho) \rceil - \sqrt{\lceil D(\rho) \rceil\log\rho} ) \leq \frac{1}{\rho^2}. \label{eqn:NFbound} 
		\end{align}
		Then, by using $D(\rho) = \gamma\log\rho, ~ \gamma > 1/p^2_{F, 1}$, we have 
		\begin{align}
			& p_{F, 1} \lceil D(\rho) \rceil - \sqrt{\lceil D(\rho) \rceil\log\rho} \nonumber \\
			& = p_{F, 1} \lceil \gamma\log\rho \rceil - \sqrt{\lceil \gamma\log\rho \rceil\log\rho} \nonumber \\
			& \geq p_{F, 1} \gamma\log\rho - \sqrt{ (\gamma\log\rho+1) \log\rho} \nonumber \\
			& = p_{F, 1} \gamma \log\rho - \sqrt{ \gamma\log^2\rho+\log\rho} \nonumber \\
			& \geq p_{F, 1} \gamma \log\rho - \sqrt{ \gamma\log^2\rho} - \sqrt{\log\rho} \label{eqn:concave} \\
			& = (p_{F, 1} \gamma - \sqrt{\gamma}) \log\rho - \sqrt{\log\rho} \label{eqn:gammacond}
		\end{align}
		where \eqref{eqn:concave} occurs due to the subadditivity\footnote{For $a, b >0$ we have $\sqrt{a}+\sqrt{b} > \sqrt{a+b}$ since $(\sqrt{a}+\sqrt{b})^2 > a+b$.} of the square root.
		Next, we will show that \eqref{eqn:gammacond} becomes positive when $\rho$ is large enough. To do this, we first show that the first term in \eqref{eqn:gammacond} is always positive. This is proven by observing that 
		\begin{align}
			\gamma > 1/p^2_{F, 1}  \Rightarrow p_{F, 1} \sqrt{\gamma} - 1 > 0 \Rightarrow p_{F, 1} \gamma - \sqrt{\gamma} > 0 . \label{eqn:positivterm}
		\end{align}
		Since $\log \rho$ increases at a higher rate than $\sqrt{\log \rho}$, it can be shown that 
		$0.5 (p_{F, 1}\gamma - \sqrt{\gamma}) \log \rho - \sqrt{\log\rho}$ will always increase after some round. Since $\lim_{\rho \rightarrow \infty} 0.5 (p_{F, 1}\gamma - \sqrt{\gamma}) \log \rho - \sqrt{\log\rho} = \infty$, this term is expected to be positive after some round. From the statement of the lemma, it is known that $\rho'_F$ is greater than or equal to this round. 
		Therefore, for $\rho \geq \rho^{\prime}_F$, $(p_{F, 1}\gamma - \sqrt{\gamma})\log\rho - \sqrt{\log\rho} > 0.5(p_{F, 1}\gamma - \sqrt{\gamma}) \log \rho$. 
		Using this and \eqref{eqn:gammacond}, we obtain
		\begin{align}
			& p_{F, 1} \lceil D(\rho) \rceil - \sqrt{\lceil D(\rho) \rceil\log\rho} \notag \\
			& \geq (p_{F, 1} \gamma - \sqrt{\gamma}) \log\rho - \sqrt{\log\rho} \notag \\
			& \geq 0.5(p_{F, 1}\gamma - \sqrt{\gamma}) \log \rho. \notag
		\end{align}
		Then, we use this result and \eqref{eqn:NFbound} to get
		\begin{align} 
			& \Pr ( N_{F}(\rho) \leq 0.5(p_{F, 1}\gamma - \sqrt{\gamma}) \log \rho ) \leq \frac{1}{\rho^2}, \text{for } \rho \geq \rho^{\prime}_F \nonumber
		\end{align}
		which is equivalent to 
		\begin{align} \label{eqn:ff}
			\Pr ( N_{F}(\rho) \leq f_F(\rho) ) \leq \frac{1}{\rho^2}, \text{for } \rho \geq \rho^{\prime}_F. 
		\end{align}
	\end{proof}
	\comment{
	\begin{corollary} \label{cor:probs}
		When the initial state distribution is uniform over $\tilde{{\cal S}}$, we have  
		\begin{align}
			p_{C, 1} & = \frac{G-2}{G-1} \nonumber \\
			p_{F, 1} & = 
			\begin{cases}
				\frac{G}{2(G-1)} & \text{if } r=1 \\
				\frac{ (G-1)(1-r)-r+r^{G}}{(G-1)(1-r)(1-r^{G-1}) } & \text{if } r\neq 1. \nonumber
			\end{cases}
		\end{align}
	\end{corollary}
	\begin{proof}
		For action C, by using the result found in the first part of Lemma \ref{lemma:MinNumPBinf} we have
		\begin{equation}
			p_{C, 1} = 1 - \frac{1}{G-1} = \frac{G-2}{G-1}. \nonumber
		\end{equation}
		For action F, we consider two cases. When $r = 1$, by using the uniform initial distribution and the result found in the first part of lemma \ref{lemma:MinNumPBinf} we have
		\begin{align}
			p_{F,1} 
			& = \left (1 + \frac{G-2}{G-1} + \frac{G-3}{G-1} + \dots + \frac{1}{G-1} \right) \frac{1}{G-1} \notag \\
			& = \left( \frac{ \frac{(G-1)G}{2} }{ G-1 } \right) \frac{1}{G-1} = \frac{G}{2(G-1)}.   \notag
		\end{align}
		When $r \neq 1$, by using the uniform initial distribution and the result found in the first part of lemma \ref{lemma:MinNumPBinf} we have
		\begin{align}
			p_{F,1} 
			& =  \left(1 + \frac{r-r^{G-1}}{1-r^{G-1}} + \dots + \frac{r^{G-2}-r^{G-1}}{1-r^{G-1}} \right) \frac{1}{G-1} \notag \\ 
			& = \frac{\sum_{i=0}^{G-2} r^i - (G-1)r^{G-1}}{(1-r^{G-1})(G-1)} \notag \\
			& = \frac{\frac{1-r^{G-1}}{1-r} - (G-1)r^{G-1}}{(1-r^{G-1})(G-1)} \notag \\
			& = \frac{(G-1)r^{G} - Gr^{G-1} + 1}{ (G-1) (r^G - r^{G-1} - r + 1) }  \notag
		\end{align}
	\end{proof}
	}
	The (expected) regret given in \eqref{eqn:regretreduced} can be decomposed into two parts: (i) regret in rounds in which the suboptimal policy is selected, (ii) regret in rounds in which the optimal policy is selected and GETBE explores. Let $\text{IR}(T)$ denote the number of rounds by round $T$ in which the suboptimal policy is selected.
	The first part of the regret is upper bounded by $\mathbb{E} (\text{IR}(T))$, since the reward in a round can be either $0$ or $1$. Similarly, the second part of the regret is upper bounded by the number of explorations when the optimal policy is $\pi^{tr}_0$. When the optimal policy is $\pi^{tr}_1$, exploration will only be performed when the suboptimal policy is selected. Hence, there is no additional regret due to explorations, since all the regret is accounted for in the first part of the regret.
	
	Let $A_{\rho}$ denote the event that the suboptimal policy is selected in round $\rho$. 
	Let
	\begin{align}
		C_{\rho} 
		:= \{ |p^C - \hat{p}^{C}_\rho| \geq \delta/\sqrt{2}  \} 
		\cup   
		\{ |p^F - \hat{p}^{F}_\rho| \geq \delta/\sqrt{2}  \} . \notag 
	\end{align}
	It can be shown that on event $C^c_{\rho}$ the Euclidian distance between $(p^C, p^F)$ and $(\hat{p}^{C}_\rho,\hat{p}^{F}_\rho)$ is less than $\delta$. This implies that on event $C^c_{\rho}$, the optimal policy is selected. Therefore, $C_{\rho}$ contains the event that the optimal policy is not selected. 
	Using the linearity of expectation and the union bound, we obtain
	\begin{align} \label{eqn:IRbound}
		\mathbb{E} [\text{IR}(T)]
		& =
		\mathbb{E} [ \sum_{\rho=1}^{T} \mathbb{I}(A_\rho) ] \nonumber \\
		& \leq \sum\limits_{\rho=1}^{T} { \sum\limits_{a\in \{ F, C \}} {\Pr\left( |p^{\text{a}} - \hat{p}^{\text{a}}_\rho| \geq \delta/\sqrt{2} \right)} }. 
	\end{align}
	
	Let $\mathbb{I}^{\text{exp}}_\rho$ be the indicator function of the event that GETBE explores.
	By the above discussion we have
	\begin{align}
		\mathbb{E} [\text{Reg}(T) | \pi^* = \pi^{tr}_1] & \leq   
		\mathbb{E} [\text{IR}(T) ]
		\label{eqn:boundregret1} \\
		\mathbb{E} [\text{Reg}(T) | \pi^* = \pi^{tr}_0] & \leq \Delta_{\max} \mathbb{E} [\text{IR}(T) ]  \notag \\ 
		& + 
		\mathbb{E} [ \sum_{\rho=1}^{T} \mathbb{I}^{\text{exp}}_{\rho}]  \label{eqn:boundregret2} .
	\end{align}
	
	Next, we bound the expected regret of GETBE for the GRBP using \eqref{eqn:boundregret1} and \eqref{eqn:boundregret2}. 

	\begin{theorem} \label{thm:GRBPreg}
		Let $x_1 := \left(1+\sqrt{(24 p_{F, 1} / \delta^2)+1} \right) / 2 p_{F, 1}$. 
		Assume that the control function is 
		\begin{align}
			D(\rho) = \gamma\log\rho \text{ where } \gamma > \max\{(x_1)^2, \frac{1}{(p_{F, 1})^2}\}. \nonumber
		\end{align}
		Let $\rho^{\prime\prime}$ be the first round in which $\delta^2 \geq \frac{3 \log \rho}{f_a(\rho)}$ for both actions, $\rho^\prime := \max\{\rho^\prime_C, \rho^\prime_F, \rho^{\prime\prime}\}$, and 
		\begin{align}
			w := 4\rho^{\prime} + \frac{2\pi^2}{3}( 1 + \frac{1}{1-e^{-\delta^2}} ) \nonumber
		\end{align}
		Then, the regret of GETBE is bounded by
		\begin{equation}
			\mathbb{E} [\text{Reg}(T) | \pi^* = \pi^{tr}_1] \leq w \nonumber
		\end{equation}
		and 
		\begin{equation}
			\mathbb{E} [\text{Reg}(T) | \pi^* = \pi^{tr}_0] \leq \lceil D(T) \rceil +  w \Delta_{\max} . \nonumber
		\end{equation} 
	\end{theorem}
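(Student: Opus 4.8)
The plan is to chain together the three reductions already established in the excerpt. By \eqref{eqn:boundregret1} and \eqref{eqn:boundregret2}, the conditional regret is controlled by $\mathbb{E}[\text{IR}(T)]$ (the expected number of rounds in which the suboptimal policy is chosen) together with the expected number of forced explorations $\mathbb{E}[\sum_{\rho=1}^{T}\mathbb{I}^{\text{exp}}_\rho]$; and by \eqref{eqn:IRbound}, $\mathbb{E}[\text{IR}(T)]$ is at most $\sum_{\rho=1}^{T}\sum_{a\in\{F,C\}}\Pr(|p^a-\hat{p}^a_\rho|\geq\delta/\sqrt2)$. So the theorem reduces to two tasks: (a) show $\mathbb{E}[\text{IR}(T)]\leq w$ by bounding each per-round deviation probability and summing; and (b) show $\mathbb{E}[\sum_\rho\mathbb{I}^{\text{exp}}_\rho]\leq\lceil D(T)\rceil$. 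Substituting into \eqref{eqn:boundregret1} and \eqref{eqn:boundregret2} then yields the two claimed bounds $w$ and $\lceil D(T)\rceil + w\Delta_{\max}$.

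The core step is bounding $\Pr(|p^a-\hat{p}^a_\rho|\geq\delta/\sqrt2)$ for a fixed action $a$ and round $\rho\geq\rho'$. I would condition on the sample count, writing the probability as the sum of $\Pr(N_a(\rho)<f_a(\rho))$ and $\Pr(|p^a-\hat{p}^a_\rho|\geq\delta/\sqrt2,\,N_a(\rho)\geq f_a(\rho))$. The first term is at most $1/\rho^2$ directly by Lemma \ref{lemma:MinNumPBinf}(ii), since $\rho\geq\rho'\geq\rho'_a$. For the second term, the outcomes observed after each selection of action $a$ form an i.i.d. Bernoulli$(p^a)$ stream $X_1,X_2,\dots$ (by the time-homogeneous Markov transition model), and $\hat{p}^a_\rho$ is the average of its first $N_a(\rho)$ entries; hence this event is contained in $\{\exists\,n\geq f_a(\rho):|\bar X_n-p^a|\geq\delta/\sqrt2\}$. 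A union bound over $n$ together with Hoeffding's inequality gives a geometric series $\sum_{n\geq f_a(\rho)}2e^{-\delta^2 n}\leq 2e^{-\delta^2 f_a(\rho)}/(1-e^{-\delta^2})$, which is where the factor $1/(1-e^{-\delta^2})$ in $w$ originates.

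It remains to make $e^{-\delta^2 f_a(\rho)}$ summable. By definition of $\rho''$, for $\rho\geq\rho'\geq\rho''$ one has $\delta^2 f_a(\rho)\geq 3\log\rho$ for both actions, so $e^{-\delta^2 f_a(\rho)}\leq\rho^{-3}\leq\rho^{-2}$. For $a=C$ this holds automatically for large $\rho$ because $f_C(\rho)=0.5\,p_{C,1}\rho$ grows linearly; for $a=F$, where $f_F(\rho)=0.5(p_{F,1}\gamma-\sqrt\gamma)\log\rho$ is only logarithmic, the requirement $\delta^2 f_F(\rho)\geq 3\log\rho$ forces $\tfrac12\delta^2(p_{F,1}\gamma-\sqrt\gamma)\geq 3$, and solving this quadratic in $\sqrt\gamma$ shows it holds precisely under the hypothesis $\gamma>x_1^2$ --- this is exactly why $x_1$ is defined as it is. Summing both contributions over $\rho\geq\rho'$ using $\sum_{\rho\geq1}\rho^{-2}=\pi^2/6$, and bounding each of the first $\rho'$ rounds crudely by $1$, collapses everything into the constant $w=4\rho'+\tfrac{2\pi^2}{3}(1+\tfrac{1}{1-e^{-\delta^2}})$, giving $\mathbb{E}[\text{IR}(T)]\leq w$.

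For task (b), each forced exploration selects action $F$ exactly once and hence increases $N_F$ by one, while forcing is triggered only while $N_F(\rho)<D(\rho)$ with $D$ non-decreasing; therefore the total number of forced explorations up to round $T$ cannot exceed $\lceil D(T)\rceil$, so $\mathbb{E}[\sum_\rho\mathbb{I}^{\text{exp}}_\rho]\leq\lceil D(T)\rceil$. Plugging (a) and (b) into \eqref{eqn:boundregret1} and \eqref{eqn:boundregret2} completes the proof. I expect the main obstacle to be the second part of the core step: because $\hat{p}^a_\rho$ is formed from an adaptively determined number of samples $N_a(\rho)$, one cannot apply Hoeffding at a fixed sample size and must instead peel over all admissible counts (producing the geometric factor), and separately one must verify that the merely logarithmically growing $f_F$ still yields a summable tail --- which is exactly what the calibration $\gamma>\max\{x_1^2,\,1/p_{F,1}^2\}$ guarantees.
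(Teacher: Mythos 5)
Your proposal is correct and shares the paper's overall skeleton --- the reduction via \eqref{eqn:boundregret1}, \eqref{eqn:boundregret2} and \eqref{eqn:IRbound}, the count lower bounds of Lemma \ref{lemma:MinNumPBinf}, the quadratic-in-$\sqrt{\gamma}$ calibration that produces $x_1$, the role of $\rho''$ for action $C$, and the $\lceil D(T)\rceil$ cap on forced explorations --- but the technical heart, the concentration of $\hat{p}^a_\rho$ under an adaptive sample count, is handled by a genuinely different decomposition. The paper applies the law of total probability over $\{N_a(\rho)=n\}$ and invokes Hoeffding \emph{conditionally} at each fixed $n$, obtaining $2e^{-n\delta^2}$ for every $n$, and then splits the $n$-sum at $f_a(\rho)$: for $n\leq f_a(\rho)$ it bounds $\Pr(N_a(\rho)=n)\leq 1/\rho^2$ via Lemma \ref{lemma:MinNumPBinf}, so in the paper the geometric factor $1/(1-e^{-\delta^2})$ arises from the \emph{small}-count regime crossed with $\sum_\rho\rho^{-2}$, while the large-count regime contributes $\pi^2/3$. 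You instead split the event into $\{N_a(\rho)<f_a(\rho)\}$ (directly $\leq 1/\rho^2$) and a union bound over all $n\geq f_a(\rho)$ of fixed-$n$ deviations of the underlying i.i.d.\ Bernoulli stream, so your geometric factor comes from the \emph{large}-count tail $\sum_{n\geq f_a(\rho)}2e^{-n\delta^2}$. Your route buys rigor: conditioning on $N_a(\rho)=n$ and treating the $n$ observed outcomes as an unconditioned i.i.d.\ sample is delicate, since the event $\{N_a(\rho)=n\}$ depends on those very outcomes (the paper's appendix states a martingale form of Hoeffding, but the conditional step is still the informal point you correctly flag); your peeling over admissible counts sidesteps this entirely. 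The price is slightly different bookkeeping: your route yields, per action, roughly $\pi^2/6+2\zeta(3)/(1-e^{-\delta^2})$ plus at most $\rho'$ for the early rounds, which does not reproduce $w$ term-by-term as you assert, but is dominated by $w=4\rho'+\frac{2\pi^2}{3}(1+\frac{1}{1-e^{-\delta^2}})$ after the crude relaxations you mention ($\rho^{-3}\leq\rho^{-2}$, early-round probabilities bounded by $2$ rather than $1$), so both claimed regret bounds follow; to splice your argument into the paper you would just need to make that final constant-collection explicit rather than asserting equality with $w$.
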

	\begin{proof}
		First, we bound $\mathbb{E} [\text{IR}(T)]$. For this, we replace the order of summations in \eqref{eqn:IRbound} and we have
		\begin{align} \label{eqn:ErrDef1}
			\mathbb{E} [\text{IR}(T)] \leq \sum\limits_{a\in \{ F, C \}} \sum\limits_{\rho = 1}^{T} \Pr\left( |p^{\text{a}} - \hat{p}^{\text{a}}_\rho| \geq \delta/\sqrt{2} \right).
		\end{align}
		Let $N_{F}^{*}(\rho) := N_{F}^{G}(\rho)$ and $N_{C}^{*}(\rho) := N_{C}^{u}(\rho)$.
		By using the law of total probability and Hoeffding inequality, we obtain for $a \in \{ F, C \}$
		\begin{align} \label{eqn:CHres1}
			& \sum\limits_{\rho=1}^{T} \Pr\left(|p^{a} - \hat{p}^{a}_\rho|\geq \frac{\delta}{\sqrt{2}} \right) \nonumber \\ 
			& = \sum\limits_{\rho=1}^{T} \sum\limits_{n=1}^{\infty} {\Pr\left( |p^{a} - \frac{N_a^{*}(\rho)}{N_{a}(\rho)}| \geq \frac{\delta}{\sqrt{2}}|N_{a}(\rho)=n \right) \Pr\left( N_{a}(\rho)=n \right)} \nonumber \\
			& = \sum\limits_{\rho=1}^{T} \sum\limits_{n=1}^{\infty} {\Pr\left( |n p^{a} - N_{a}^{*}(\rho)| \geq n\frac{\delta}{\sqrt{2}} \right) \Pr\left( N_{a}(\rho)=n \right)} \nonumber \\
			& \leq \sum\limits_{\rho=1}^{T} \sum\limits_{n=1}^{\infty} {2e^{-2\frac{(n\delta/\sqrt{2})^2}{n}} \Pr\left( N_{a}(\rho)=n \right)} \nonumber \\
			& = \sum\limits_{\rho=1}^{T}\sum\limits_{n=1}^{\infty} {2e^{-n\delta^2} \Pr\left( N_{a}(\rho)=n \right)} .
		\end{align}
		For each action, we use the result of Lemma \ref{lemma:MinNumPBinf} and divide the summation in \eqref{eqn:CHres1} into two summations. Note that the bounds on $N_a(\rho)$ given in Lemma \ref{lemma:MinNumPBinf} hold when $\rho \geq \rho^{\prime} \geq \max\{ \rho^{\prime}_C, \rho^{\prime}_F\}$. Therefore, we have
		\begin{align} \label{eqn:Sep1}
			& \sum\limits_{\rho=1}^{T} \sum\limits_{n=1}^{\infty} {2e^{-n\delta^2} \Pr\left( N_{a}(\rho)=n \right)} = \nonumber \\
			& \sum\limits_{\rho=1}^{\rho^{\prime}-1} \sum\limits_{n=1}^{\infty} {2e^{-n\delta^2} \Pr\left( N_{a}(\rho)=n \right)} + \nonumber \\ 
			& \sum\limits_{\rho=\rho^{\prime}}^{T} \sum\limits_{n=1}^{\infty} { 2e^{-n\delta^2} \Pr\left( N_{a}(\rho)=n \right) } \nonumber \\
			& = K^{\prime} + \sum\limits_{\rho=\rho^{\prime}}^{T} \sum\limits_{n=1}^{\infty} { 2e^{-n\delta^2} \Pr\left( N_{a}(\rho)=n \right) }
		\end{align}
		where $K^{\prime} = \sum\limits_{\rho=1}^{\rho^{\prime}-1} \sum\limits_{n=1}^{\infty} {2e^{-n\delta^2} \Pr\left( N_{a}(\rho)=n \right)}$ which is finite since $\rho^\prime$ is finite. 
		Since
		\begin{align}
			\sum\limits_{n=1}^{\infty} { \Pr\left( N_{a}(\rho)=n \right)} = 1 \nonumber
		\end{align}
		and as $e^{-n\delta^2} \leq 1$, then we have
		\begin{align}
			\sum\limits_{n=1}^{\infty} {e^{-n\delta^2} \Pr\left( N_{a}(\rho)=n \right)} \leq 1. \nonumber
		\end{align}
		Therefore, an upper bound on $K^\prime$ can be given as
		 \begin{align}
			K^{\prime} & = \sum\limits_{\rho=1}^{\rho^{\prime}-1} \sum\limits_{n=1}^{\infty} {2e^{-n\delta^2} \Pr\left( N_{a}(\rho)=n \right)} \leq \sum\limits_{\rho=1}^{\rho^{\prime}-1} 2 < 2\rho^{\prime}. \label{eqn:boundonK}
		\end{align}
		We have
		\begin{align} \label{eqn:Sep2}
			& \sum\limits_{\rho=\rho^{\prime}}^{T}\sum\limits_{n=1}^{\infty} {2e^{-n\delta^2} \Pr\left( N_{a}(\rho)=n \right)} = \nonumber \\
			& \sum\limits_{\rho=\rho^{\prime}}^{T}\sum\limits_{n=1}^{f_a(\rho)} {2e^{-n\delta^2} \Pr\left( N_{a}(\rho)=n \right)} + \nonumber \\ 
			& \sum\limits_{\rho=\rho^{\prime}}^{T}\sum\limits_{n=f_a(\rho)+1}^{\infty} {2e^{-n\delta^2} \Pr\left( N_{a}(\rho)=n \right)} .
		\end{align}
		For the first summation in \eqref{eqn:Sep2}, we use \eqref{eqn:fc} and \eqref{eqn:ff} for each action as an upper bound since it is the case when $n \leq f_a(\rho)$. Therefore,
		\begin{align} \label{eqn:sum1bound}
			& \sum\limits_{\rho=\rho^{\prime}}^{T}\sum\limits_{n=1}^{f_a(\rho)} {2e^{-n\delta^2} \Pr\left( N_{a}(\rho)=n \right)} \leq \sum\limits_{\rho=1}^{T}\sum\limits_{n=1}^{f_a(\rho)} \frac{2e^{-n\delta^2}}{\rho^2} \nonumber \\
			& \leq \sum\limits_{\rho=1}^{\infty}\sum\limits_{n=1}^{\infty} \frac{2e^{-n\delta^2}}{\rho^2} = \frac{\pi^2}{3(1-e^{-\delta^2})}
		\end{align}
		For the second summation in \eqref{eqn:Sep2}, we first show that $\delta^2 \geq \frac{3 \log \rho}{f_a(\rho)}$ for each action when $\rho \geq \rho^\prime$. 
		
		For $a=F$, we have $\delta^2 \geq \frac{3 \log \rho}{f_F(\rho)} = \frac{6}{p_{F, 1}\gamma - \sqrt{\gamma}}$ since $\gamma \geq (x_1)^2$. The proof is as follows. 
		Note that the term $p_{F, 1}\gamma - \sqrt{\gamma}$ is positive because of \eqref{eqn:positivterm}.
		In order to have $\delta^2 \geq \frac{6}{p_{F, 1}\gamma - \sqrt{\gamma}}$, we must have $p_{F, 1}\gamma - \sqrt{\gamma} - 6/\delta^2 \geq 0$. 
		This can be re-written as a second order polynomial function, which is given by
		\begin{align}
			g(x) = ax^2+bx+c \geq 0 \nonumber
		\end{align}
		where $a = p_{F, 1}$, $b = -1$,$c = -6/\delta^2$ and $x = \sqrt{\gamma}$. Since $\gamma$ is positive, we will find positive values of $x$ for which $g(x)$ is non-negative. Also, $g(x)$ is a convex function since its second derivative is $2a$, which is positive. Hence, $g(x)$ is non-negative for positive $x$'s which are greater than the largest root. The roots of $g(x)$ are given as
		\begin{align}
			x_1 = \frac{1+\sqrt{1+\frac{24p_{F, 1}}{\delta^2}}}{2p_{F, 1}}, ~ x_2 = \frac{1-\sqrt{1+\frac{24p_{F, 1}}{\delta^2}}}{2p_{F, 1}}. \nonumber
		\end{align}
		It is clear that only $x_1$ is positive. Thus, $g(x)$ is non-negative for $x = \sqrt{\gamma} \geq x_1$. Therefore, $\gamma$ has to be greater than $(x_1)^2$ so that $\delta^2 \geq \frac{6}{p_{F, 1}\gamma - \sqrt{\gamma}}$.
		
		For $a = C$ we have $\frac{3 \log \rho}{f_C(\rho)} = \frac{3 \log \rho}{0.5p_{C, 1}\rho}$. This quantity decreases as $\rho$ increases and converges to zero in the limit $\rho$ goes to infinity. Hence, this quantity becomes smaller than $\delta^2$ after some round. Hence, for $\rho \geq \rho^\prime$, we have $\delta^2 \geq \frac{3 \log \rho}{ f_{a}(\rho)}$ for both actions. Thus,
		\begin{align} 					 	\label{eqn:sum2bound}
			& \sum\limits_{\rho = \rho^{\prime}}^{T} \sum\limits_{n=f_a(\rho)+1}^{\infty} 2e^{-n\delta^2} \Pr\left( N_{a}(\rho)=n \right) \notag \\
			& \leq \sum\limits_{\rho = \rho'}^{T} 2e^{-f_a(\rho) \delta^2} \sum\limits_{n=f_a(\rho)+1}^{\infty} \Pr\left( N_{a}(\rho)=n \right) \nonumber \\
			&  \leq \sum\limits_{\rho = \rho'}^{T} 2e^{-f_a(\rho) \delta^2} 
			\leq \sum\limits_{\rho = \rho'}^{T} 2e^{-3 \log \rho} \notag \\			
			& \leq \sum\limits_{\rho=1}^{\infty} {\frac{2}{\rho^3}} \leq  \frac{\pi^2}{3}.
		\end{align}
		
		Finally, we combine the results of \eqref{eqn:sum2bound}, \eqref{eqn:sum1bound} and \eqref{eqn:boundonK} together with the result of \eqref{eqn:Sep1} and sum the final result over the two actions to get a bound for the expression in \eqref{eqn:ErrDef1}. This results in
		\begin{align} \label{eqn:ErrBound1}
			\mathbb{E} [\text{IR}(T)] & \leq 2( 2\rho^{\prime} +  \frac{\pi^2}{3(1-e^{-\delta^2})} + \frac{\pi^2}{3} ) \nonumber \\
			& = 4 \rho^\prime + \frac{2\pi^2}{3}( 1 + \frac{1}{1-e^{-\delta^2}} ) = w.
		\end{align}
		
		Assume the optimal policy is $\pi^{tr}_1$. Then, the expected number of rounds in which the suboptimal policy is selected is finite and bounded by $w$ (independent of $T$) in \eqref{eqn:ErrBound1}. 
		In this case, the exploration is done only when the suboptimal policy is selected and there will be no extra regret term due to exploration. 
		Therefore,		
		\begin{align}
			\mathbb{E} [\text{Reg}(T) | \pi^* = \pi^{tr}_1] & \leq 4 \rho^\prime + \frac{2\pi^2}{3}( 1 + \frac{1}{1-e^{-\delta^2}} )  \notag \\
			& = w . \notag
		\end{align}
		Assume the optimal policy is $\pi^{tr}_0$. Similar to the previous case, the expected number of rounds in which the suboptimal policy is selected is at most $w$. Since the suboptimal policy for this case is $\pi^{tr}_1$, it will always be played if it is selected (no exploration). Hence, the regret in these rounds is at most $\Delta_{\max}$. However, the learner will explore action $F$ when the optimal policy is selected. This results in additional regret. Since, the number of explorations of GETBE by round $T$ is bounded by $\lceil D(T) \rceil$, the regret that will result from explorations is also bounded by $\lceil D(T) \rceil$. Therefore,
		\begin{align}
			\mathbb{E} [\text{Reg}(T) | \pi^* = \pi^{tr}_0] 
			& \leq  \lceil D(T) \rceil + w\Delta_{\max} . \notag
		\end{align}
	\end{proof}
	Theorem \ref{thm:GRBPreg} bounds the expected regret of GETBE. When $\pi^* = \pi^{tr}_1$, $\text{Reg} (T)  = O(1)$ since both actions will be selected with positive probability by the optimal policy at each round.
	When $\pi^* = \pi^{tr}_0$, $\text{Reg} (T)  = O( D(T) )$ since GETBE forces to explore action $F$ logarithmically many times to avoid getting stuck in the suboptimal policy. 
	
	\section{Numerical Results}\label{sec:numeric}
	
	We create a synthetic medical treatment selection problem based on \cite{Isobe2013cancer}. Each state is assumed to be a stage of gastric cancer ($G = 4$, $D = 0$). The goal state is defined as at least three years of survival. Action $C$ is assumed to be chemotherapy and action $F$ is assumed to be surgery. 
	For action $C$, $p^C$ is determined by using the average survival rates for young and old groups at different stages of cancer given in \cite{Isobe2013cancer}. For each stage, the survival rate at three years is taken to be the probability of hitting $G$ by taking action $C$ continuously. With this information, we set $p^C = 0.45$. Also, the five-year survival rate of surgery given in  \cite{stomachcancer} ($29\%$) is used to set $p^F = 0.3$. 
	
	The regrets shown in Fig. 3 and 4 correspond to different variants of GETBE, named as GETBE-SM, GETBE-PS and GETBE-UCB. Each variant updates the state transition probabilities in a different way. GETBE-SM uses the control function together with sample mean estimates of the state transition probabilities. Unlike GETBE-SM, GETBE-UCB and GETBE-PS do not use the control function. GETBE-PS uses posterior sampling from the Beta distribution \cite{thompson2012tsmab} to sample and update $p^F$ and $p^C$. GETBE-UCB adds an {\em inflation term} that is equal to  $\sqrt{\frac{2\log(N_F(\rho)+N_C(\rho))}{N_a(\rho)}}$ to the sample mean estimates of the state transition probabilities that correspond to action $a$. PS-PolSelection and UCB-PolSelection algorithms treat each policy as a super-arm, and use PS and UCB methods to select the best policy among the two threshold policies. Instead of updating the state transition probabilities, they directly update the rewards of the policies. 
	
	Initial state distribution is taken to be the uniform distribution.
	Initial estimates of the transition probabilities are formed by setting $N_{F}(1) = 1, ~  N^{G}_{F}(1) \sim \text{Unif}[0,1], ~ N_{C}(1) = 1, ~  N^{u}_{C}(1) \sim \text{Unif}[0,1]$. 
	The time horizon is taken to be $5000$ rounds, and 
	the control function is set to be $D(\rho) = 15\log\rho$.
	Reported results are averaged over $200$ iterations.
	\begin{figure}[h]
		\hspace{-1em}
		\includegraphics[scale=0.275]{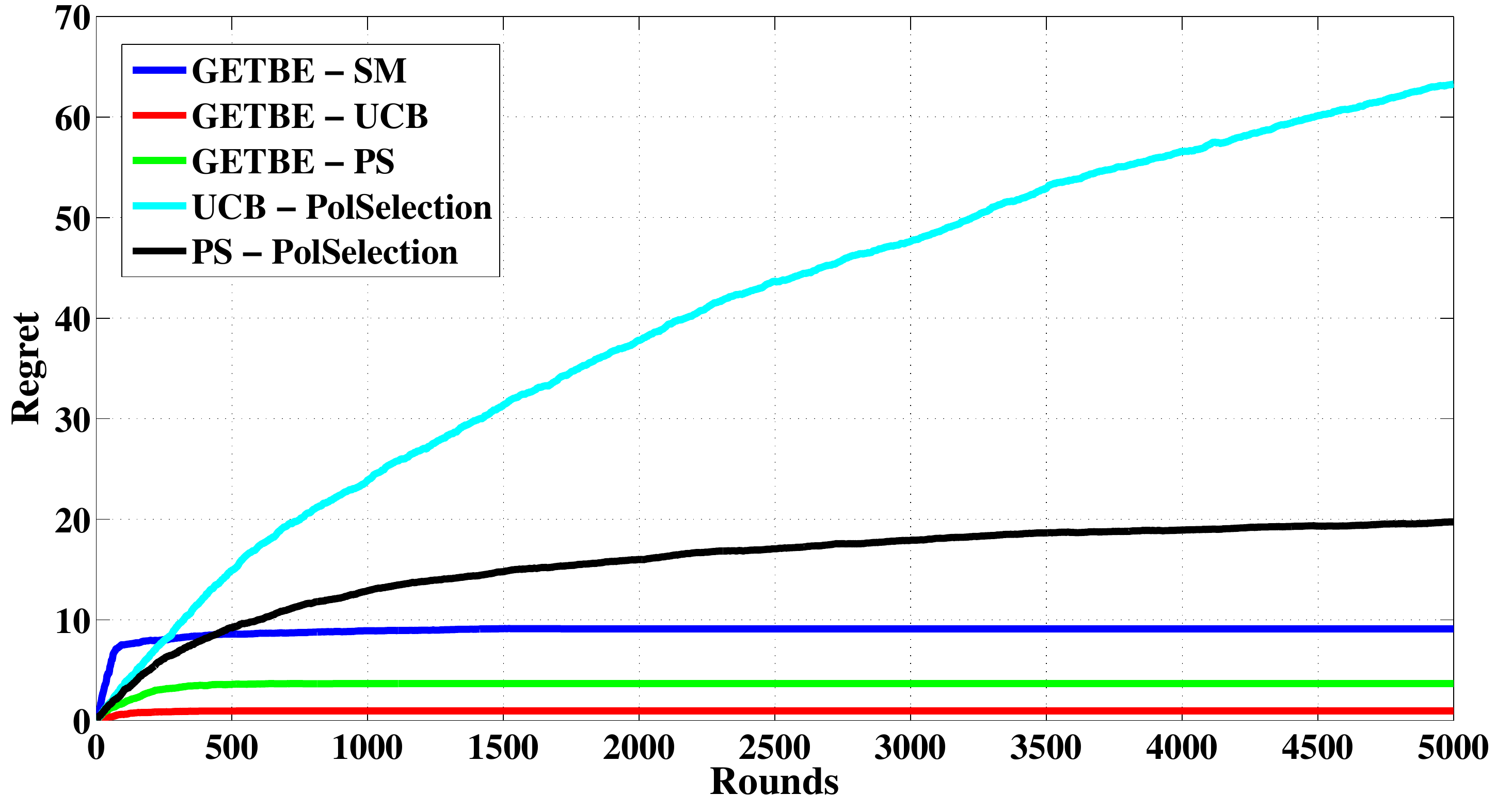}
		\caption{Regrets of GETBE and the other algorithms as a function of the number of rounds, when the transition probabilities lie in the no-exploration region.} \label{fig:regretinf}
	\end{figure}
	
	In Fig. \ref{fig:regretinf} the regrets of GETBE and other algorithms are shown for $p^F$ and $p^C$ values given above. 
	For this case, the the optimal policy is $\pi^{tr}_1$ and all variants of GETBE achieve finite regret, as expected. 
	However, the regrets of UCB-PolSelection and PS-PolSelection increase logarithmically, since they sample each policy logarithmically many times. 
	
	\begin{figure}[h]
		\hspace{-1em}
		\includegraphics[scale=0.275]{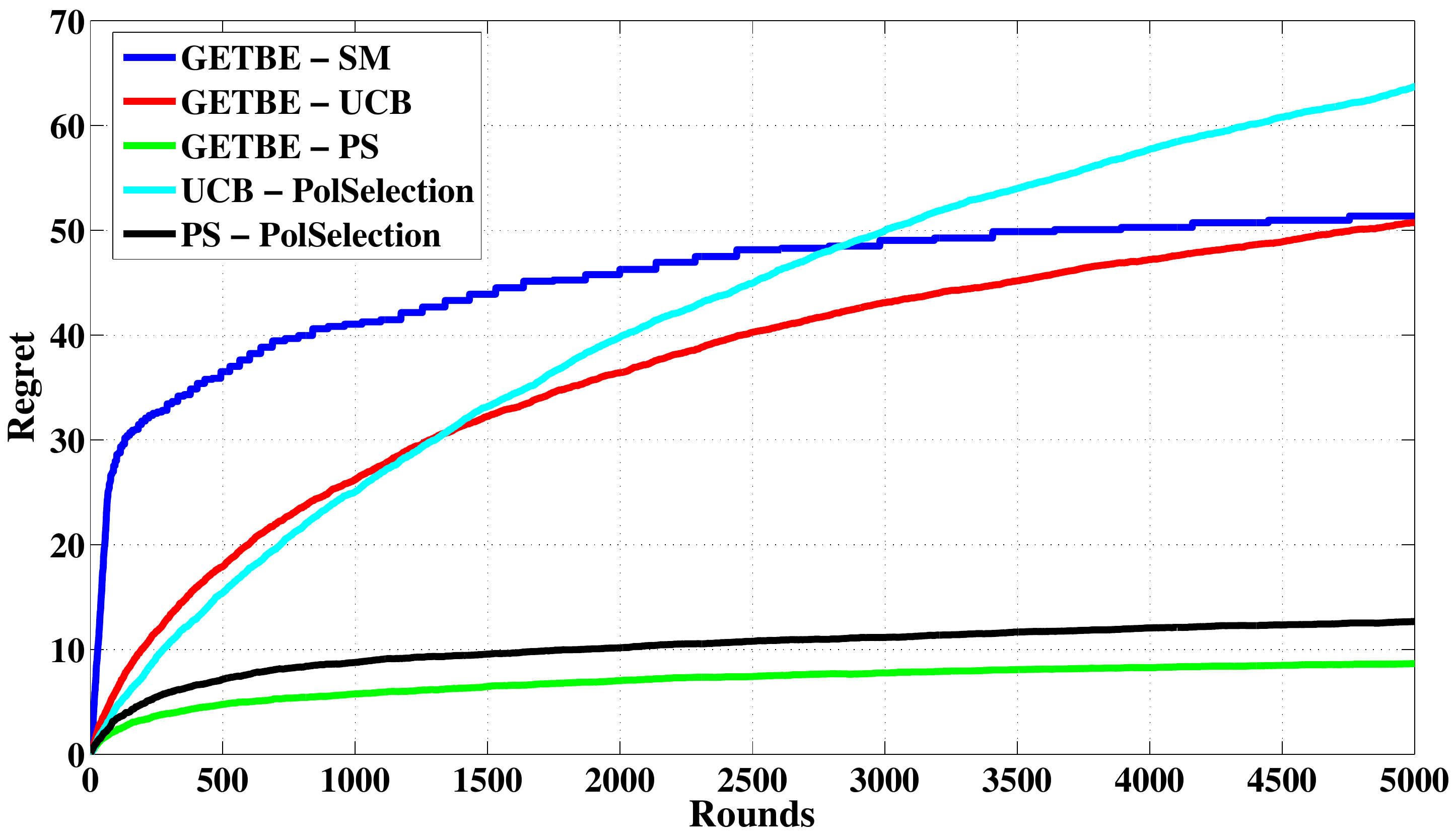}
		\caption{Regrets of GETBE and the other algorithms as a function of the number of rounds, when the transition probabilities lies in the exploration region.} \label{fig:closetobound}
	\end{figure}
	\begin{figure}[h]
		\hspace{-1em}
		\includegraphics[scale=0.22]{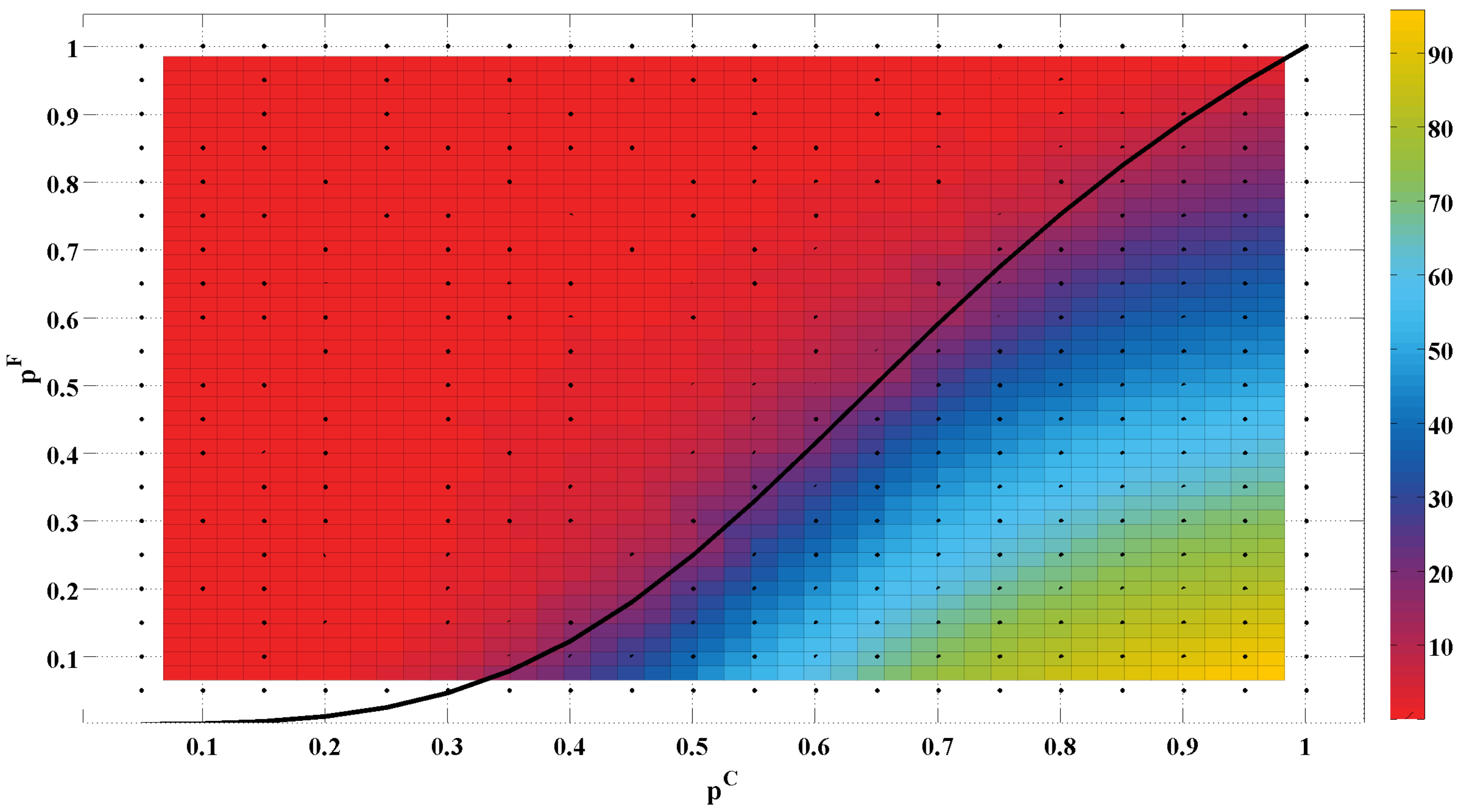}
		\caption{Regret of GETBE for different values of $p^C, p^F$. Black line shows the boundary.} \label{fig:3D}
	\end{figure}
	Next, we set $p^C = 0.65$ and $p^F = 0.3$, in order to show how the algorithms perform when the optimal policy is $\pi^{tr}_0$.
	The result for this case is given in Fig. \ref{fig:closetobound}. As expected, the regret grows logarithmically over the rounds for all variants of GETBE, PS-PolSelection and UCB-PolSelection. GETBE-PS achieves the lowest regret for this case.
	
	Fig. \ref{fig:3D} illustrates the regret of GETBE-SM as a function of $p^F$ and $p^C$ for $T=1000$. As the state transition probabilities shift from the no-exploration region to the exploration region the regret increases as expected. 
	\com{You should also show the boundary line on this figure.}
	
	\section{Conclusion}\label{sec:Conclusion}
	In this paper, we introduced the Gambler's Ruin Bandit Problem. We characterized the form of the optimal policy for this problem, and then developed a learning algorithm called GETBE that operates on the GRBP to learn the optimal policy when the transition probabilities are unknown. We proved that the regret of this algorithm is either bounded (finite) or logarithmic in the number of rounds based on the region that the true transition probabilities lie in. In addition to the regret bounds, we illustrated the performance of our algorithm via numerical experiments.
	
	\bibliographystyle{IEEE}
	\bibliography{OSA}
	
	\appendix
	\section{Hoeffding Inequality} \label{App:AppendixA}
	Let $X_1, X_2, \ldots, X_n$ be random variables in range of [0, 1] and $\mathbb{E}[X_t | X_1, \ldots, X_{t-1}] = \mu$. Let $S_n = \sum_{i=1}^{n} X_i$. Then for any nonegative $z$,
	\begin{align}
		& \Pr\left( S_n \leq \mathbb{E}\left(S_n\right) - z \right) \leq \exp ( -\dfrac{2z^2}{n} ) \nonumber \\
		& \Pr\left( |S_n - \mathbb{E}\left(S_n\right)| \geq  z \right) \leq 2 \exp ( -\dfrac{2z^2}{n} ) \nonumber
	\end{align}
	
\end{document}